\theoremstyle{plain}
\newtheorem{thm}{Theorem}
\newtheorem{prop}{Proposition}
\newtheorem{defn}{Definition}
\newcommand{\circDist}{\textsf{CW}}
\newcommand{\empCircDist}{\textsf{ECW}}
\newcommand{\gmmDist}{\textsf{MW}}
\newcommand{\wassDist}{\textsf{W}}
\newcommand{\data}{\mathcal{D}}
\newcommand{\lp}{\textsf{LP}}
\newcommand{\ch}{\textsf{ch}}
\newcommand{\scope}{\textsf{sc}}
\newcommand{\rvars}[1]{\ensuremath{\mathbf{#1}}\xspace}
\newcommand{\X}{\rvars{X}}
\newcommand{\Y}{\rvars{Y}}
\newcommand{\jstate}[1]{\ensuremath{\mathbf{#1}}\xspace}
\newcommand{\x}{\jstate{x}}
\newcommand{\y}{\jstate{y}}
\newcommand{\norm}[1]{\left\lVert#1\right\rVert}
\newcommand{\ac}[1]{{\textcolor{black}{#1}}}
\newcommand{\binary}{\ensuremath{\{0,1\}}}
\DeclareMathOperator{\Ex}{\mathbb{E}}
\newcommand{\rethm}[3]{\newtheorem*{#1}{Theorem~\ref{#1}}
\begin{#1}[#2]#3\end{#1}}
\newcommand{\reprop}[3]{\newtheorem*{#1}{Proposition~\ref{#1}}
\begin{#1}[#2]#3\end{#1}}
\newcommand{\midlinewidth}{1.0pt}
\newcommand{\middist}{17pt}
\newcommand{\halfdist}{19pt}
\definecolor{lacamlilac} {RGB} {107,93,153}
\definecolor{lacamlilac2} {RGB} {93, 109, 152}
\definecolor{lacamlightlilac} {RGB} {174, 166, 201}
\definecolor{lacamdarklilac} {RGB} {51, 10, 102}
\definecolor{lacamdarklilac5} {RGB} {51, 10, 102}
\colorlet{lacamdarklilac4} {lacamdarklilac5!80!}
\colorlet{lacamdarklilac3} {lacamdarklilac5!60!}
\colorlet{lacamdarklilac2} {lacamdarklilac5!40!}
\colorlet{lacamdarklilac1} {lacamdarklilac5!20!}
\definecolor{lacamgold5} {RGB} {255, 87, 0}
\colorlet{lacamgold4} {lacamgold5!80!}
\colorlet{lacamgold3} {lacamgold5!60!}
\colorlet{lacamgold2} {lacamgold5!40!}
\colorlet{lacamgold1} {lacamgold5!20!}
\definecolor{violet} {RGB} {119, 111, 178}
\definecolor{petroil2} {RGB} {36, 165, 175}
\definecolor{petroil4} {RGB} {30, 132, 149}
\definecolor{petroil6} {RGB} {23, 101, 115}
\definecolor{gold2} {RGB} {255, 130, 0}
\definecolor{gold4} {RGB} {250, 100, 0}
\definecolor{gold6} {RGB} {245, 90, 0}
\definecolor{lacamoil5}{rgb}{0.13, 0.67, 0.8}
\colorlet{lacamoil4} {lacamoil5!80!}
\colorlet{lacamoil3} {lacamoil5!60!}
\colorlet{lacamoil2} {lacamoil5!40!}
\colorlet{lacamoil1} {lacamoil5!20!}
\definecolor{tomato0} {HTML} {EF9A9A}
\definecolor{tomato1} {HTML} {F44336}
\definecolor{tomato2} {HTML} {E53935}
\definecolor{tomato3} {HTML} {D32F2F}
\definecolor{tomato4} {HTML} {C62828}
\definecolor{tomato5} {HTML} {B71C1C}
\definecolor{peas1} {HTML} {009688}
\definecolor{peas2} {HTML} {00897B}
\definecolor{peas3} {HTML} {00796B}
\definecolor{peas4} {HTML} {00695C}
\definecolor{peas5} {HTML} {004D40}
\definecolor{bgrey0} {HTML} {78909C}
\definecolor{bgrey1} {HTML} {607D8B}
\definecolor{bgrey2} {HTML} {546E7A}
\definecolor{bgrey3} {HTML} {455A64}
\definecolor{bgrey4} {HTML} {37474F}
\definecolor{bgrey5} {HTML} {263238}
\definecolor{olive0} {HTML} {C5E1A5}
\definecolor{olive1} {HTML} {AED581}
\definecolor{olive2} {HTML} {9CCC65}
\definecolor{olive3} {HTML} {8BC34A}
\definecolor{olive4} {HTML} {7CB342}
\definecolor{olive5} {HTML} {689F38}
\definecolor{pink0} {HTML} {FCE4EC}
\definecolor{pink1} {HTML} {F8BBD0}
\definecolor{pink2} {HTML} {F48FB1}
\definecolor{pink3} {HTML} {F06292}
\definecolor{pink4} {HTML} {EC407A}
\definecolor{pink5} {HTML} {FF80AB}
\definecolor{brown0} {HTML} {D7CCC8}
\definecolor{brown1} {HTML} {BCAAA4}
\definecolor{brown2} {HTML} {A1887F}
\definecolor{brown3} {HTML} {8D6E63}
\definecolor{brown4} {HTML} {795548}
\definecolor{brown5} {HTML} {6D4C41}
\definecolor{brown6} {HTML} {5D4037}
\definecolor{yellow0} {HTML} {CDDC39}
\definecolor{yellow1} {HTML} {9E9D24}
\definecolor{yellow3} {HTML} {FFBD2A}
\definecolor{yellow4} {HTML} {FFB000}
\definecolor{yellow5} {HTML} {FFD600}
\definecolor{ForestGreen}{RGB}{34,139,34}
\title{Optimal Transport for Probabilistic Circuits}
\author[1]{\href{mailto:<acioting@asu.edu>?Subject=Your Paper `Optimal Transport for Probabilistic Circuits`}{Adrian~Ciotinga}{}}
\author[1]{YooJung Choi}
\affil[1]{%
    School of Computing and Augmented Intelligence\\
    Arizona State University
}
\pgfplotsset{compat=1.18}
\begin{document}
\maketitle

\begin{abstract}
We introduce a novel optimal transport framework for probabilistic circuits (PCs). While it has been shown recently that divergences between distributions represented as certain classes of PCs can be computed tractably, to the best of our knowledge, there is no existing approach to compute the Wasserstein distance between probability distributions given by PCs. We propose a Wasserstein-type distance that restricts the coupling measure of the associated optimal transport problem to be a probabilistic circuit. We then develop an algorithm for computing this distance by solving a series of small linear programs and derive the circuit conditions under which this is tractable. Furthermore, we show that we can easily retrieve the optimal transport plan between the PCs from the solutions to these linear programs. Lastly, we study the empirical Wasserstein distance between a PC and a dataset, and show that we can estimate the PC parameters to minimize this distance through an efficient iterative algorithm.
\end{abstract}

\section{Introduction}

The Wasserstein distance is a statistical distance metric corresponding to the objective value taken by the optimal transport problem as proposed by Kantorovich's optimal transport framework that, given two probability measures, finds its optimal value at a coupling measure where the expected distance between the original two measures is minimized~\citep{kantorovichot}. Computing such a distance has proven extremely useful, with applications in generative modeling~\citep{arjovsky2017wgans}, data privacy~\citep{tcloseness}, and distributionally robust optimization~\citep{Rahimian_2022}. Providing a detailed mapping between probability measures, optimal transport maps have also proven useful for problems such as fairness auditing~\citep{Black_2020} and as generative models on their own~\citep{rout2022generative}.

\ac{Unfortunately, computing the Wasserstein distance efficiently and exactly is a difficult task for all but the simplest distributions. This has led to the popularity of Wasserstein-like distances that enable tractable computation, such as the Sliced Wasserstein distance \citep{bonneel:hal-00881872}, Tree Wasserstein distance \citep{treewasserstein} and the Mixture Wasserstein distance \cite{delon2020wasserstein}. Others address this challenge by finding approximate solutions, with common approaches being the Sinkhorn \cite{sinkhorn} and factored couplings \citep{factoredcouplings} optimal transport solvers.}

Modeling probability distributions in a way that enables tractable computation of queries is of great interest to the machine learning community. Probabilistic circuits (PCs)~\citep{probcirc} provide a unifying framework for representing many classes of tractable probabilistic models as computational graphs; within this framework, tractability of certain queries can be guaranteed by imposing structural properties on the computational graph of the circuit. This includes tractable marginal and conditional inference, as well as pairwise queries that compare two distributions such as Kullback-Leibler (KL) divergence and cross-entropy~\citep{liang2017CI,vergari2021atlas}. However, to the best of our knowledge, there is no existing algorithm that tractably computes the Wasserstein distance between two probabilistic circuits.

While algorithms for computing other statistical distance measures between PCs are well-established, the Wasserstein distance offers a distinct advantage in many applications. Measures such as KL divergence and cross-entropy are unbounded between distributions with disjoint supports; on the other hand, the $p$-Wasserstein distance is always bounded for distributions with finite $p$-th moments \citep[p. 107]{otbook}. Computing the Wasserstein distance also provides a bound for other statistical distance metrics such as the Prokhorov metric and the total-variation distance~\citep{bounding}. However, computing the Wasserstein distance is often intractable, even for models that are tractable for other probabilistic queries such as the KL divergence or cross-entropy, due to the inherent optimization problem required to be solved.

This paper focuses on computing (or bounding) the Wasserstein distance and optimal transport plan (i) between two PCs and (ii) between a PC and an empirical distribution. For (i) we propose a Wasserstein-type distance that upper-bounds the true Wasserstein distance and provide an efficient and exact algorithm to compute it between two circuits (Section~\ref{sec:otpc}). For (ii) we propose a parameter estimation algorithm for PCs that seeks to minimize the Wasserstein distance between a circuit and an empirical distribution (Section~\ref{sec:learning}). We empirically evaluate our proposed methods on randomly generated PCs as well as on a benchmark dataset (Section~\ref{sec:experiments}).

\section{Preliminaries}

We use capital letters ($X$) to denote random variables and lowercase letters ($x$) to denote their assignments. Boldface denotes a set of random variables and their assignments respectively (e.g., $\X$ and $\x$).

\paragraph{Wasserstein Distance and Optimal Transport}
    Let $P$ and $Q$ be probability measures on metric space $\mathbb{R}^n$. For $p\!\geq\!1$, the \emph{$p$-Wasserstein distance} between $P$ and $Q$ is defined~as:
    \begin{align}
        \wassDist_p^p(P,Q) = \inf_{\gamma \in \Gamma(P,Q)} \Ex_{\gamma(\x,\y)}[\norm{\x-\y}_p^p] \label{eq:wp-distance}
    \end{align}
    where \ac{$\norm{\cdot}_p$ denotes the vector $p$-norm and} $\Gamma(P,Q)$ denotes the set of all \emph{couplings}, i.e.\ joint distributions whose marginal distributions match $P$ and $Q$. That is, the following holds for all $\gamma \in \Gamma(P,Q)$:
    \begin{align}
        P(\x) = \int_{\mathbb{R}^n} \gamma(\x,\y) d\y, \quad Q(\y) = \int_{\mathbb{R}^n} \gamma(\x,\y) d\x. \label{eq:marginal-constraint}
    \end{align}
Here, the \emph{Wasserstein objective} of a (not necessarily optimal) coupling refers to the expectation inside the infimum in Equation~\ref{eq:wp-distance} taken over that coupling, and the \emph{Wasserstein distance} between two distributions refers to the value taken by the Wasserstein objective for the optimal coupling.
It can be shown that there is always a coupling that obtains the infimum above~\citep{otbook}. Such optimal coupling $\gamma^*$ induces a \textit{transport plan} $\x \mapsto \gamma^*(\x,.)$. If the coupling is deterministic, this is called a \textit{transport map} $\x \mapsto T(x)$ where $T(x)$ is the support of $\gamma^*(\x,.)$.

\paragraph{Probabilistic Circuits}
Many tractable probabilistic models---arithmetic circuits~\citep{darwiche2003differential}, sum-product networks~\citep{poon2011sum}, cutset networks~\citep{rahman2014cutset}, and more---can be understood through a unifying framework of \emph{probabilistic circuits}~\citep{probcirc}.
\begin{defn}\label{def:pcs}
    A probabilistic circuit (PC) $C$ over a set of 
    variables $\X$ is a parameterized, rooted directed acyclic graph (DAG) with three types of nodes: sum, product, and input nodes. 
    Each input node $n$ is associated with function $f_n$ that encodes a probability distribution over a variable $X_i \in \X$, also called its \emph{scope} $\scope(n)$. The set of child nodes for an internal node (sum or product) $n$ is denoted $\ch(n)$, and its scope is given by $\scope(n)=\bigcup_{c \in \ch(n)}\scope(c)$.
    Each sum node $n$ has normalized parameters $\theta_{n,c}$ for each child node $c$.
    For an assignment $\x$, let $\x_n$ denote its projection onto the scope $\scope(n)$ of node $n$. Then a PC rooted at node $n$ recursively defines a probability distribution $p_n(\x)$ as follows: 
    \begin{align*}
        p_n(\x) =\begin{cases}
        f_n(\x) & \text{if $n$ is an input node,} \\
      \prod_{c \in \ch(n)} p_c(\x_c) & \text{if $n$ is a product node,} \\
      \sum_{c \in \ch(n)}\theta_{n,c} p_c(\x_c) & \text{if $n$ is a sum node.}
    \end{cases}
    \end{align*}
\end{defn}
Probabilistic circuits admit exact and efficient computation of many probabilistic inference queries, enabled by enforcing certain structural constraints. 
In particular, throughout this paper we assume two properties, \textit{smoothness} and \textit{decomposability}, which enable tractable (polytime) computation of marginal and conditional queries. A PC is \emph{smooth} if every sum node $n\in C$ has the same scope as its children: $\forall n_i \in \ch(n)$, $\scope(n_i) = \scope(n)$; it is \emph{decomposable} if the children of every product node $n \in C$ have disjoint scopes: $\forall n_i \neq n_j \in \ch(n)$, $\scope(n_i) \bigcap \scope(n_j) = \emptyset$. Tractable computation of different query classes can be enabled by ensuring additional properties.\footnote{E.g., KL divergence between two \textit{compatible} and \textit{deterministic} PCs can be computed in polynomial time~\citep{vergari2021atlas}.}
Critically, enforcing these structural properties does not restrict the \textit{expressivity} of PCs (i.e., they can still represent any distribution), but may decrease their \textit{expressive efficiency} (i.e., an exponentially-sized circuit may be required when enforcing a constraint).

\section{Optimal Transport between Probabilistic Circuits}\label{sec:otpc}

We now consider the problem of computing Wasserstein distances and optimal transport plans between distributions represented by probabilistic circuits. For notational simplicity, suppose $P(\X)$ and $Q(\Y)$ are two PCs defining probability measures on a metric space, with a bijective mapping between variables in $\X$ and those in $\Y$; w.l.o.g., let $X_i$ and $Y_i$ map to each other. 
We also assume that the input distributions in the PCs are univariate,\footnote{We assume univariate input nodes for notational simplicity, but this is not a requirement.} and also allow constant-time computation of the Wasserstein distance---following the standard assumption of tractability of input distributions for tractable inference on PCs. In particular, this is the case for $p$-Wasserstein distance between categorical distributions and for the 2-Wasserstein distance between Gaussian distributions. Unfortunately, even with these assumptions, computing the Wasserstein distance between probabilistic circuits is computationally hard. 
\begin{thm}\label{thm:w-hardness}
    Suppose $P$ and $Q$ are probabilistic circuits over $n$ Boolean variables. Then computing the $\infty$-Wasserstein distance between $P$ and $Q$ is coNP-hard.
\end{thm}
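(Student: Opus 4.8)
The plan is to first show that computing $\wassDist_\infty$ already decides equivalence of the two circuits, and then to reduce a coNP-complete problem to that equivalence test. For the first part: over $n$ Boolean variables, $\norm{\x-\y}_\infty = \max_i\abs{x_i-y_i}$ takes values in $\binary$, and equals $0$ exactly when $\x=\y$. Hence in the $\infty$-Wasserstein problem $\wassDist_\infty(P,Q) = \inf_{\gamma\in\Gamma(P,Q)}(\gamma\text{-essential supremum of }\norm{\x-\y}_\infty)$, a coupling attains value $0$ only if it is supported on the diagonal $\{(\x,\x)\}$, and such a coupling exists iff $P=Q$ (if it exists, its two marginals coincide; if $P=Q$, the coupling $\gamma(\x,\x)=P(\x)$ works). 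If instead $P(\x_0)>Q(\x_0)$ for some $\x_0$, then any coupling satisfies $\gamma(\x_0,\x_0)\le Q(\x_0)<P(\x_0)$, hence places positive mass off the diagonal, forcing the essential supremum — and so $\wassDist_\infty$ — to be $1$. Thus $\wassDist_\infty(P,Q)\in\binary$ and is $0$ iff $P$ and $Q$ define the same distribution; any algorithm computing $\wassDist_\infty(P,Q)$ therefore decides equivalence of two smooth, decomposable PCs, so it suffices to prove that this equivalence problem is coNP-hard.

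For the second part, I would reduce from unsatisfiability of a $3$-CNF formula $\phi$ (equivalently, validity of a DNF), which is coNP-complete. Given $\phi$ with clauses $C_1,\dots,C_m$, the goal is to build in polynomial time two smooth, decomposable PCs $P_\phi$ and $Q_\phi$ — over the variables of $\phi$ together with polynomially many fresh Boolean variables, which the statement allows — such that $P_\phi$ and $Q_\phi$ represent the same distribution iff $\phi$ is unsatisfiable. Each circuit is assembled from one small sub-circuit per clause, each over only the three variables of its clause together with auxiliary ``selector'' variables, so that the per-clause gadgets are trivially decomposable; the clauses are then combined into $P_\phi$ and $Q_\phi$ along structurally incompatible decompositions, arranged so that the two induced distributions are forced to coincide precisely on the assignments that simultaneously falsify every clause — that is, on all assignments iff $\phi$ is unsatisfiable — after a routine padding step (introducing an unused variable) that prevents spurious agreement when $\phi$ has a single model.

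The crux, and the main obstacle, is carrying out this construction while genuinely maintaining smoothness and decomposability. Decomposability makes the naive routes fail: a smooth decomposable circuit cannot compute $\id{\x\models\phi}$ for a general CNF (that would give polynomial-time weighted model counting), and a shallow mixture over the clauses only realizes a low-degree polynomial in the variables, so its equality to a simple reference distribution is itself decidable in polynomial time. The hardness must therefore come from the \emph{incompatibility} of the two circuits' decomposition structures: when two smooth decomposable PCs are compatible one can form their difference as a polynomial-size circuit and test it for the zero function efficiently (e.g.\ via a tractable self-product), so it is exactly by making $P_\phi$ and $Q_\phi$ incompatible that this shortcut is blocked. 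Verifying that the constructed circuits agree on every assignment precisely when $\phi$ is unsatisfiable, that they are of polynomial size, and that all of their parameters — in particular the normalizing constants — are computable in polynomial time, is where the real work lies; membership in coNP (guess a distinguishing assignment and verify it in polynomial time by evaluating both circuits) is the easy direction and is not needed for the theorem.
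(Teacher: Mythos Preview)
Your first step is correct and useful: on Boolean variables $\norm{\x-\y}_\infty\in\binary$, so $\wassDist_\infty(P,Q)\in\binary$ and equals $0$ exactly when $P=Q$; computing $\wassDist_\infty$ therefore decides equality of the two PCs. But the second step --- reducing a coNP-hard problem to that equality test --- is not actually carried out. You sketch a reduction from 3-CNF UNSAT via per-clause gadgets glued along ``structurally incompatible decompositions,'' yet you give no concrete construction and explicitly concede that ``the real work lies'' in building circuits that agree on every assignment iff $\phi$ is unsatisfiable while staying smooth and decomposable. The obstacles you yourself raise (a smooth decomposable PC cannot encode the indicator of a general CNF; shallow clause-mixtures realize only low-degree polynomials whose equality is easy to test) are genuine, and nothing in the outline overcomes them. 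As written this is a gap, not a proof.

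The paper avoids exactly this difficulty by choosing a source problem whose instances \emph{already are} smooth decomposable circuits, rather than CNFs, which resist such encoding. A DNF term is a decomposable product of literals over disjoint variables, and a DNF becomes (after smoothing) a sum of such products; an OBDD is directly a deterministic, structured-decomposable circuit. Starting from DNF equivalence, or from consistency of two OBDDs with different variable orders, therefore requires no gadgetry --- the translation to a PC is immediate and polynomial. This is both shorter and, in the OBDD version, consistent with your intuition that incompatibility is where the hardness enters: two OBDDs with different variable orders yield PCs that decompose the scope differently and are not mutually compatible, which is precisely what blocks the $\int(P-Q)^2$ equivalence test you point to.
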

In fact, the above is true even when the PCs satisfy stronger structural constraints (determinism and structured decomposability) that enable tractable inference of hard queries such as maximum-a-posteriori (MAP)~\cite{choi2017relaxing} and entropy~\citep{vergari2021atlas} and even closed-form maximum-likelihood parameter estimation.
At a high level, the proof proceeds by reducing from the problem of deciding consistency of two OBDDs (a type of deterministic and structured-decomposable circuits) which is NP-hard~\citep[Lemma 8.14]{meinel1998algorithms}. In particular, given the two OBDDs, we can construct two deterministic and structured-decomposable PCs in polynomial time such that the input OBDDs are consistent iff $\wassDist_\infty$ between the PCs is not 1. We refer to Appendix~\ref{proof:hardness} for a detailed proof.

\begin{figure}
    \centering
    \scalebox{0.4}{
    \begin{tabular}{c}
    \begin{tikzpicture}
      \contnodeleft[line width=\midlinewidth, draw=tomato2]{n1}{$\textbf{X}_1$}
      \contnodeleft[line width=\midlinewidth, below=\middist of n1, draw=lacamdarklilac4]{n2}{$\textbf{X}_2$}
      \wprodnode[line width=\midlinewidth, right=\middist of n2, draw=petroil2]{p1}{petroil2}
      \edge[line width=\midlinewidth]{p1}{n1,n2};
      
      \contnodeleft[line width=\midlinewidth, below=\middist of n2, draw=lacamdarklilac4]{n3}{$\textbf{X}_2$}
      \sumnode[line width=\midlinewidth, right=\middist of n3, draw=lacamdarklilac4]{s1}
      \contnodeleft[line width=\midlinewidth, below=\middist of s1, draw=tomato2]{n4}{$\textbf{X}_1$}
      
      \edge[line width=\midlinewidth]{s1}{n2,n3};
      
      \wprodnode[line width=\midlinewidth, right=\middist of s1, draw=petroil2]{p2}{petroil2}
      \edge[line width=\midlinewidth]{p2}{s1,n4};
      
      \sumnode[line width=\midlinewidth, right=\middist * 1.75 of p1, draw=petroil2]{s2}
      \edge[line width=\midlinewidth]{s2}{p1,p2};
      
      \contnodeleft[line width=\midlinewidth, above=\middist of s2, draw=pink3]{n5}{$\textbf{X}_3$}
      \wprodnode[line width=\midlinewidth, right=\middist of s2, draw=gold4]{p3}{gold4}
      \edge[line width=\midlinewidth]{p3}{s2,n5};
    
      \contnodeleft[line width=\midlinewidth, right = \middist * 12 of n1, draw=lacamdarklilac4]{n6}{$\textbf{Y}_2$}
      \contnodeleft[line width=\midlinewidth, below = \middist of n6, draw=tomato2]{n7}{$\textbf{Y}_1$}
      \wprodnode[line width=\midlinewidth, right=\middist of n7, draw=petroil2]{p4}{petroil2}
      \sumnode[line width=\midlinewidth, right=\middist of p4, draw=petroil2]{s3}
      \wprodnode[line width=\midlinewidth, right=\middist of s3, draw=gold4]{p5}{gold4}
      \edge[line width=\midlinewidth]{p4}{n6,n7};

      \wprodnode[line width=\midlinewidth, below=\middist of p4, draw=petroil2]{p8}{petroil2}
      \contnodeleft[line width=\midlinewidth, above = \middist of s3, draw=pink3]{n81}{$\textbf{Y}_3$}
      \edge[line width=\midlinewidth]{s3}{p8,p4};
      
      \wprodnode[line width=\midlinewidth, below=\middist of p8, draw=petroil2]{p6}{petroil2}
      \sumnode[line width=\midlinewidth, right=\middist of p6, draw=petroil2]{s4}
      \contnodeleft[line width=\midlinewidth, below = \middist of s4, draw=pink3]{n8}{$\textbf{Y}_3$}
      \edge[line width=\midlinewidth]{p5}{s3,n81};
      \contnodeleft[line width=\midlinewidth, left = \middist of p6, draw=lacamdarklilac4]{n9}{$\textbf{Y}_2$}
      \wprodnode[line width=\midlinewidth, right=\middist of s4, draw=gold4]{p7}{gold4}
      \contnodeleft[line width=\midlinewidth, below = \middist of n9, draw=tomato2]{n10}{$\textbf{Y}_1$}
      \edge[line width=\midlinewidth]{p8}{n9,n7};
      \edge[line width=\midlinewidth]{p6}{n9,n10};
      \edge[line width=\midlinewidth]{s4}{p6,p8};
      \edge[line width=\midlinewidth]{p7}{s4,n8};
      
      \sumnode[line width=\midlinewidth, above=\middist of p7, xshift=20px, draw=gold4]{s5}
      \edge[line width=\midlinewidth]{s5}{p7,p5};
      
      \bernodeleft[line width=\midlinewidth, right = \middist * 8 of n81, draw=gold4]{123}
      
      \bernodeleft[line width=\midlinewidth, below = \middist * 2 of 123, xshift=30px, draw=petroil2]{23}
      
      \bernodeleft[line width=\midlinewidth, left = \middist * 2 of 23, draw=pink3]{1}{3}
      
      \bernodeleft[line width=\midlinewidth, below = \middist * 2 of 23, xshift=30px, draw=lacamdarklilac4]{3}{2}
      
      \bernodeleft[line width=\midlinewidth, left = \middist * 2 of 3, draw=tomato2]{2}{1}
      \edge[line width=\midlinewidth]{123}{1,23};
      \edge[line width=\midlinewidth]{23}{2,3};
    \end{tikzpicture} \\
    
     \end{tabular}
    }
    \caption{\small Compatible circuits over $\mathbf{X}\!=\!\{X_1,X_2,X_3\}$ and $\mathbf{Y}\!=\!\{Y_1,Y_2,Y_3\}$. Nodes in the same color have same scope, \ac{and the scope decomposition is visualized on the right}.}
    \label{fig:compatibility}
\end{figure}

Theorem \ref{thm:w-hardness} shows that computing the $\infty$-Wasserstein distance between two PCs is computationally hard. Whether computing $\wassDist_p$ for some other fixed $p$ (such as $p=1$ or $2$) is NP-hard is still an open question---there only exist efficient algorithms that bound this quantity between GMMs (which can be viewed as a special case of PCs), rather than compute it exactly \citep{delon2020wasserstein, chen2018optimal}. However, we are interested in efficiently computing or upper-bounding $\wassDist_p$ for \textit{arbitrary} $p$, including $\wassDist_\infty$. Thus, to address this computational challenge, we consider a Wasserstein-type distance between PCs by restricting the set of coupling measures to be PCs of a particular structure. Furthermore, we derive the structural conditions on the input PCs required to construct such structure and find the parameters that minimize the Wasserstein objective in time quadratic in the size of the input circuits.

\subsection{\texorpdfstring{$\circDist_p$}{CWp}: A Distance between Circuits}

\ac{The building blocks of PCs are sum, product, and input nodes. Sum nodes encode mixtures of their child distributions; algorithms to compute optimal transport distances between mixtures of distributions are well-studied~\citep{delon2020wasserstein, chen2018optimal}, and formulate these distances as a solution to the weighted discrete optimal transport problem between mixture components in each distribution. 
Next, product nodes encode factorizations of distributions; since the $p$-norm $\norm{\cdot}_p^p$ is \emph{separable} across dimensions, $\wassDist_p$ between two factorized distributions is the sum of the Wasserstein distances between the corresponding factors.
Lastly, input nodes provide a natural base-case for an optimal transport framework: we assume that we can compute the Wasserstein distance and corresponding transport plan between two input nodes in constant-time, which is the usual assumption for tractable inference using PCs.
Given this, we ask a natural question: can optimal transport distances and transport plans between PCs be computed recursively?}

\ac{We show that the answer is yes. To do this,} we propose the notion of \emph{coupling circuits} between two \emph{compatible} 
PCs, and introduce a Wasserstein-type distance $\circDist_p$ which restricts the coupling set in Equation~\ref{eq:wp-distance} to be circuits of this form. We then exploit the structural properties guaranteed by coupling circuits to derive efficient algorithms for computing $\circDist_p$ and associated transport plan.

\begin{figure}
    \centering
    \scalebox{0.6}{
    \begin{tabular}{c}
    \begin{tikzpicture}
      \bernode[line width=\midlinewidth]{n1}{$P_1(\textbf{X}_1)$}
      \bernode[line width=\midlinewidth, right=\middist * 1.25 of n1]{n2}{$P_2(\textbf{X}_2)$}
      \prodnode[line width=\midlinewidth, above=\halfdist of n1, xshift=20pt,label={\small $P(\textbf{X})$}]{p1}
      \edge[line width=\midlinewidth]{p1}{n1,n2};

      \bernode[line width=\midlinewidth, right = \middist * 2 of n2]{n3}{$Q_1(\textbf{Y}_1)$}
      \bernode[line width=\midlinewidth, right=\middist * 1.25 of n3]{n4}{$Q_2(\textbf{Y}_2)$}
      \prodnode[line width=\midlinewidth, above=\halfdist of n3, xshift=20pt,label={\small $Q(\textbf{Y})$}]{p2}
      \edge[line width=\midlinewidth]{p2}{n3,n4};
    
      \draw[->, line width=\midlinewidth](5,1) -- (7,1) node[midway, above] {\textsc{Cpl}$(P,Q)$};
    
      \bernode[line width=\midlinewidth, right = \middist * 3 of n4]{n5}{\small $\textsc{Cpl}(P_1, Q_1)$}
      \bernode[line width=\midlinewidth, right=\middist * 3.25 of n5]{n6}{\small $\textsc{Cpl}(P_2, Q_2)$}
      \prodnode[line width=\midlinewidth, above=\halfdist of n5, xshift=37pt,label={\small $C(\textbf{X,Y})$}]{p3}
      \edge[line width=\midlinewidth]{p3}{n5,n6};
    \end{tikzpicture} \\
    \begin{tikzpicture}
      \sumnode[line width=\midlinewidth, label={\small $P(\textbf{X})$}]{p1}
      \bernode[line width=\midlinewidth, below=\halfdist of p1, xshift=-\middist*0.7]{n1}{$P_1$}
      \bernode[line width=\midlinewidth, below=\halfdist of p1, xshift=\middist*0.7]{n2}{$P_2$}
      \weigedge[line width=\midlinewidth, left=0.1]{p1}{n1}{$\theta_1^P$};
      \weigedge[line width=\midlinewidth, right=0.1]{p1}{n2}{$\theta_2^P$};
    
      \sumnode[line width=\midlinewidth, right=\middist*3 of p1,label={\small $Q(\textbf{Y})$}]{p2}
      \bernode[line width=\midlinewidth, below=\halfdist of p2, xshift=-\middist*1.5]{n3}{$Q_1$}
      \bernode[line width=\midlinewidth, below=\halfdist of p2]{n4}{$Q_2$}
      \bernode[line width=\midlinewidth, below=\halfdist of p2, xshift=\middist*1.5]{n5}{$Q_3$}
      \weigedge[line width=\midlinewidth, left=0.1]{p2}{n3}{\small $\theta_1^Q$};
      \weigedge[line width=\midlinewidth, right=-0.1]{p2}{n4}{\small $\theta_2^Q$};
      \weigedge[line width=\midlinewidth, right=0.1]{p2}{n5}{\small $\theta_3^Q$};
    
      \draw[->, line width=\midlinewidth](3.5,0.5) -- (5.5,0.5) node[midway, above] {\textsc{Cpl}$(P,Q)$};

      \sumnode[line width=\midlinewidth, right=\middist*8.5 of p2, yshift=20pt,label={\small $C(\textbf{X,Y})$}]{p3}
    
      \bernode[line width=\midlinewidth, below=\halfdist of p3, xshift=-\middist*6,yshift=-20pt]{n6}{\tiny $\textsc{Cpl}(P_1,Q_1)$}
      \bernode[line width=\midlinewidth, below=\halfdist of p3, xshift=-\middist*3.5,yshift=-20pt]{n7}{\tiny $\textsc{Cpl}(P_1,Q_2)$}
      \bernode[line width=\midlinewidth, below=\halfdist of p3, xshift=-\middist*1.1,yshift=-20pt]{n8}{\tiny $\textsc{Cpl}(P_1,Q_3)$}
      \bernode[line width=\midlinewidth, below=\halfdist of p3, xshift=\middist*1.1,yshift=-20pt]{n9}{\tiny $\textsc{Cpl}(P_2,Q_1)$}
      \bernode[line width=\midlinewidth, below=\halfdist of p3, xshift=\middist*3.5,yshift=-20pt]{n10}{\tiny $\textsc{Cpl}(P_2,Q_2)$}
      \bernode[line width=\midlinewidth, below=\halfdist of p3, xshift=\middist*6,yshift=-20pt]{n11}{\tiny $\textsc{Cpl}(P_2,Q_3)$}
      \weigedge[line width=\midlinewidth, above=0.5pt, rotate=30]{p3}{n6}{\small $\theta_{1,1}$};
      \weigedge[line width=\midlinewidth, left=8pt, rotate=43]{p3}{n7}{\small $\theta_{1,2}$};
      \weigedge[line width=\midlinewidth, left=0.2pt, yshift=-5pt]{p3}{n8}{\small $\theta_{1,3}$};
      \weigedge[line width=\midlinewidth, right=0.2pt, yshift=-5pt]{p3}{n9}{\small $\theta_{2,1}$};
      \weigedge[line width=\midlinewidth, right=8pt, rotate=-43]{p3}{n10}{\small $\theta_{2,2}$};
      \weigedge[line width=\midlinewidth, above=0.5pt, rotate=-30]{p3}{n11}{\small $\theta_{2,3}$};
    \end{tikzpicture}\\

    \large $\theta_{1,1}+\theta_{1,2}+\theta_{1,3}=\theta_1^P$ {} {} {} {} {} {} {}
   \large $\theta_{2,1}+\theta_{2,2}+\theta_{2,3}=\theta_2^P$  \\\\
    \large $\theta_{1,1}+\theta_{2,1}=\theta_1^Q$ {} {} {} {} {} {} {} \large $\theta_{1,2}+\theta_{2,2}=\theta_2^Q$ {} {} {} {} {} {} {}
    \large $\theta_{1,3}+\theta_{2,3}=\theta_3^Q$
    
     \end{tabular}
    }
    \caption{\small Recursive construction of coupling circuits. (Top) Product nodes couple children with corresponding scopes. (Bottom) Sum nodes couple the Cartesian product of children, with marginal constraints for the parameters.}
    \label{fig:couplingcircuits}
\end{figure}

\begin{defn}[Circuit compatibility~\citep{vergari2021atlas}]\label{def:compatibility}    
    Two smooth and decomposable PCs $P$ and $Q$ over RVs $\X$ and $\Y$, respectively, are \emph{compatible} if the following two conditions hold: (i) there is a bijective mapping $\leftrightarrow$ between RVs $X_i$ and $Y_i$, and (ii) any pair of product nodes $n \in P$ and $m \in Q$ with the same scope up to the bijective mapping are mutually compatible and decompose the scope the same way---that is, if $n$ and $m$ have scopes $\X$ and $\Y$ and $\X \leftrightarrow \Y$, then $n$ and $m$ have the same number of children, and for each child of $n$ with scope $\X_i$ there is a corresponding child of $m$ with scope $\Y_i$ such that $\X_i \leftrightarrow \Y_i$. Such pair of nodes are called \emph{corresponding} nodes.
\end{defn}
That is, two circuits are compatible if they have the same \textit{hierarchical scope partitioning}; see Figure~\ref{fig:compatibility} for an example pair of compatible circuits. This does not require the circuit structures to be the same.

\begin{defn}[Coupling circuit]\label{def:couplingcircuits}
    A \emph{coupling circuit} $C$ between two compatible PCs $P$ and $Q$ with scopes $\X$ and $\Y$, respectively, is a PC with the following properties. (i) Each node $r \in C$ is recursively a coupling of a pair of nodes $n \in P$ and $m \in Q$.\footnote{The coupling circuit has the same structure (but not parameters) as the product circuit~\citep{vergari2021atlas} of $P$ and $Q$. Informally, we construct a \textit{cross product} of children at each pair of sum nodes, and the product of corresponding children at each pair of product~nodes.} (ii) Each node $r \in C$ that is a coupling of sum nodes $n\in P, m\in Q$ with edge weights $\{\theta_i\}$ and $\{\theta_j\}$ has edge weights $\{\theta_{i,j}\}$ such that $\sum_i\theta_{i,j}=\theta_j$ and $\sum_j\theta_{i,j}=\theta_i$ for all $i$ and $j$.
\end{defn}

\ac{Note that the above definition does not assign parameters to a coupling circuit structure; rather, the properties of a coupling circuit restrict the set of feasible parameters.}
The second property described above ensures that coupling circuit $C$ satisfies the marginal-matching constraints in Equation~\ref{eq:marginal-constraint} with respect to $P$ and $Q$ (proof in Appendix~\ref{proof:marginalmatching}) by requiring that the sub-circuit rooted at any internal node in the coupling circuit matches marginals to the corresponding nodes in the original circuits (which is a stronger constraint than the entire coupling circuit simply matching marginal distributions to the original circuits).
We are now ready to define our proposed distance metric between PCs, which is the minimum Wasserstein objective obtained by a valid parameterization of their coupling circuit.
\begin{defn}[Circuit Wasserstein distance]\label{def:cw}
    Let $P(\X)$ and $Q(\Y)$ be compatible PCs and $C_{\theta}(\X,\Y)$ \ac{a} coupling circuit parameterized by $\theta$. The \emph{$p$-Circuit Wasserstein distance} between $P$ and $Q$ is defined as:
    \begin{equation*}
        \emph{\circDist}_p^p(P,Q) = \inf_{\theta} \Ex_{C_{\theta}(\x,\y)}[\norm{\x-\y}_p^p].
    \end{equation*}
\end{defn}
We now investigate some properties of $\circDist_p$. First, we note that \ac{$\circDist_p$ upper-bounds the true Wasserstein distance between PCs as both are infima of the same Wasserstein objective, while the feasible set of couplings for $\circDist_p$ is more~restrictive.} Moreover, $\circDist_p$ is a metric on any set of compatible circuits, which is contrary to some other statistical measures such as KL divergence used to compare distributions. See Appendix~\ref{proof:metric} for a formal proof.
\begin{prop}\label{prop:metric}
    For any set $\mathcal{C}$ of compatible circuits, $\circDist_p$ defines a metric on $\mathcal{C}$.
\end{prop}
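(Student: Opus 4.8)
The plan is to establish the four metric axioms for $\circDist_p$. Non-negativity is immediate, since $\circDist_p^p$ is an infimum of expectations of the non-negative quantity $\norm{\x-\y}_p^p$. For the identity of indiscernibles, the forward direction uses that every coupling circuit is in particular a coupling satisfying Equation~\ref{eq:marginal-constraint} (as already noted in the text), so $\circDist_p \geq \wassDist_p$; since $\wassDist_p$ is a metric, $\circDist_p(P,Q)=0$ forces the two PCs to induce the same distribution. Conversely $\circDist_p(P,P)=0$: parameterize the coupling circuit of $P$ with itself ``diagonally'' --- at each coupled sum node set $\theta_{i,j}=\theta_i\id{i=j}$ (feasible, since both marginal constraints then read $\theta_i=\theta_i$) and at each coupled input node use the coupling supported on $\x=\y$ --- so every surviving branch contributes $\norm{\x-\x}_p^p=0$. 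Symmetry follows because the coupling-circuit structure over $(P,Q)$ and over $(Q,P)$ are isomorphic (product-node couplings are order-agnostic, and the sum-node cross-products differ only by re-indexing), and transposing a feasible parameterization $\{\theta_{i,j}\}\mapsto\{\theta_{j,i}\}$ stays feasible (the two marginal constraints merely swap) while computing the reflected coupling $\gamma'(\y,\x)=\gamma(\x,\y)$, which has the same Wasserstein objective since $\norm{\x-\y}_p^p=\norm{\y-\x}_p^p$; taking infima both ways gives $\circDist_p(P,Q)=\circDist_p(Q,P)$.

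The substantive part is the triangle inequality, which I would prove by induction on the shared hierarchical scope structure of three mutually compatible circuits $P_1,P_2,P_3$ (pairwise compatibility with consistent variable bijections suffices). For nodes $n_1,n_2,n_3$ with a common scope in the three circuits, let $\delta(n_a,n_b)^p$ be the minimal Wasserstein objective over coupling sub-circuits of the node coupling $n_a$ and $n_b$, so that $\circDist_p(P_a,P_b)=\delta(\mathrm{root}_a,\mathrm{root}_b)$; the goal is $\delta(n_1,n_3)\leq\delta(n_1,n_2)+\delta(n_2,n_3)$. At input nodes, $\delta(\cdot,\cdot)$ is just $\wassDist_p$ on the input distributions, which is a metric. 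At product nodes with $L$ common factors, $\delta(n_a,n_b)^p=\sum_\ell\delta((n_a)_\ell,(n_b)_\ell)^p$ by separability of $\norm{\cdot}_p^p$, so applying the inductive hypothesis factorwise and then Minkowski's inequality to the length-$L$ vectors $(\delta((n_1)_\ell,(n_2)_\ell))_\ell$ and $(\delta((n_2)_\ell,(n_3)_\ell))_\ell$ yields the bound. At sum nodes with edge weights $a^{(1)},a^{(2)},a^{(3)}$, the relevant characterization is $\delta(n_a,n_b)^p=\min_w\sum_{i,j}w_{i,j}\,\delta((n_a)_i,(n_b)_j)^p$ over $w$ in the transportation polytope with marginals $a^{(a)},a^{(b)}$ --- exactly the small LP the paper solves at each sum node. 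Taking minimizers $w^{12},w^{23}$, I would form the discrete gluing $w^{123}_{i,j,k}=w^{12}_{i,j}w^{23}_{j,k}/a^{(2)}_j$ (zero where $a^{(2)}_j=0$) and $w^{13}_{i,k}=\sum_j w^{123}_{i,j,k}$, verify $w^{13}$ has marginals $a^{(1)},a^{(3)}$, and chain
\begin{align*}
\delta(n_1,n_3)^p &\leq \sum_{i,k}w^{13}_{i,k}\,\delta((n_1)_i,(n_3)_k)^p = \sum_{i,j,k}w^{123}_{i,j,k}\,\delta((n_1)_i,(n_3)_k)^p\\
&\leq \sum_{i,j,k}w^{123}_{i,j,k}\big(\delta((n_1)_i,(n_2)_j)+\delta((n_2)_j,(n_3)_k)\big)^p,
\end{align*}
using feasibility of $w^{13}$, then $\sum_j w^{123}_{i,j,k}=w^{13}_{i,k}$, then the inductive hypothesis per triple. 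Taking $p$-th roots and applying Minkowski's inequality with respect to the weights $w^{123}$, then marginalizing out $k$ (resp.\ $i$) and invoking optimality of $w^{12}$ (resp.\ $w^{23}$), the right-hand side collapses to $\delta(n_1,n_2)+\delta(n_2,n_3)$. Instantiating at the roots completes the proof.

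I expect the triangle inequality --- specifically, the interaction of gluing with the circuit's hierarchical structure --- to be the main obstacle. The tempting approach of gluing the two coupling distributions along $P_2$ and marginalizing out $\X^{(2)}$ fails, because coupling circuits have a fixed product-circuit structure and are not closed under the mixtures this marginalization introduces at sum nodes. The resolution is to never construct an explicit optimal coupling circuit for $(P_1,P_3)$: glue only at the level of the per-sum-node transportation polytopes, where it is the standard discrete optimal-transport gluing, and propagate the bound through the recursion with Minkowski's inequality, which needs only the \emph{value} inequality $\delta(n_1,n_3)^p\leq\sum_{i,k}w^{13}_{i,k}\,\delta((n_1)_i,(n_3)_k)^p$ for the feasible (not necessarily optimal) plan $w^{13}$.
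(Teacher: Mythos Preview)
Your proposal is correct and takes a genuinely different route for the triangle inequality. For non-negativity, symmetry, and identity of indiscernibles, you and the paper argue essentially the same way (you give more detail on the diagonal parameterization witnessing $\circDist_p(P,P)=0$). For the triangle inequality, however, the paper does exactly the global-gluing construction you flag in your last paragraph: given optimal coupling circuits $C_2$ for $(P,R)$ and $C_3$ for $(R,Q)$, it forms $C(\X,\Y,\rvars{Z})=C_2(\X\mid\rvars{Z})\,C_3(\Y\mid\rvars{Z})\,R(\rvars{Z})$ and applies the norm triangle inequality at the distribution level, asserting without further structural argument that this furnishes a feasible coupling circuit for $(P,Q)$ upon marginalizing out $\rvars{Z}$. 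Your structural induction avoids this entirely: you glue only at the discrete level of the per-sum-node transportation polytopes, feed the resulting feasible plan $w^{13}$ into the LP characterization of $\delta(n_1,n_3)^p$, and propagate the bound via Minkowski through sum and product layers. The paper's argument is shorter and mirrors the textbook Wasserstein triangle-inequality proof; yours is longer but directly exploits the recursive computation of $\circDist_p$ (Equation~\ref{eq:cw-obj}) and does not rely on the coupling-circuit class being closed under the glue-and-marginalize operation---which is precisely the point where the paper's argument is least explicit.
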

Lastly, \ac{the optimal} coupling circuit $C(\X,\Y)$ corresponding to $\circDist_p(P,Q)$ induces a (albeit not necessarily optimal) transport plan that maps a point $\x$ to a distribution $C(\Y |\X=\x)$ and vice versa.

\subsection{Tractable Computation of \texorpdfstring{$\circDist_p$}{CWp}}

\ac{The structure of coupling circuits enables efficient computation of both the Circuit Wasserstein distance and optimal coupling circuit parameters. Informally, the $\circDist_p$ optimization problem at a coupling circuit's sum node is simply the weighted discrete optimal transport problem, where weights are the original sum node parameters and distances are the $\circDist_p$ distances between child circuits. The $\circDist_p$ optimization problem also decomposes at coupled product nodes, as product nodes encode factored distributions. Thus, we leverage the desirable properties of coupling circuits to exactly and efficiently computes the Circuit Wasserstein distance of two compatible PCs, as well as the optimal coupling circuit parameters. The key observation enabling our algorithm is that the Wasserstein objective for a given parameterization of the coupling circuit can be computed recursively through a single feedforward pass through the circuit, and can also be minimized over its parameters in a single forward pass.}

\paragraph{Recursive Computation of the Wasserstein Objective}
Let $C(\X,\Y)$ be a parameterized coupling circuit and $g(n) = \Ex_{C_n}[\norm{\x-\y}_p^p]$ the corresponding $\circDist_p$-objective function at each node $n \in C$. We can write $g(n)$ recursively as follows (see Appendix~\ref{proof:wcomp} for correctness proof):
\begin{equation}\label{eq:cw-obj}
   g(n) \!=\!\!\begin{cases} 
      \wassDist_p^p(c_1,c_2) \!\!&\!\! \text{if $n$ is $\otimes$ of input nodes,} \\
      \sum_{c\in\ch(n)} g(c) \!\!&\!\! \text{if $n$ is $\otimes$ of internal nodes,} \\
      \sum_{c\in\ch(n)}\! \theta_{n,c} g(c) \!\!&\!\! \text{if $n$ is $\oplus$ node.} 
    \end{cases}
\end{equation}

\vspace{-10pt}
Thus, we can push computation of the Wasserstein objective down to the leaf nodes of a coupling circuit, and our algorithm only requires a closed-form solution for $\wassDist_p$ between univariate input distributions as the base case.

\begin{algorithm}[t]
   \caption{$\textsc{Couple}(n,m)$: {\small coupling circuit that optimizes $\circDist_p^p(n,m)$ of compatible PCs rooted at nodes $n,m$}}\label{algimp:couple-cw}
   
\textbf{Note:} We omit calls to a cache storing previously-computed coupling circuits $\textsc{Couple}(n,m)$ for simplicity.

\begin{algorithmic}[1]
   \If{$n,m$ are input nodes} 
   \State $r \gets \text{input node denoting OT plan between $n,m$}$

    \ElsIf{$n,m$ are sum nodes}
    \State $r \gets $ new sum node with parameters $\theta_{i,j}$
    \ForAll{$c_i \in n$.children, $c_j \in m$.children}
        \State $r$.children[$i, j$] $ \gets\textsc{Couple}(c_i,c_j$) 
    \EndFor
    
    \State $\lp \gets \begin{cases} 
    \begin{array}{ll@{}ll}
    \text{min}  & \sum_{i,j} \circDist_p(r\text{.children}[i,j])*\theta_{i,j} &\\
    \text{s.t.}& \forall i,\, \sum_j\theta_{i,j}=\theta_i \\&\forall j,\, \sum_i\theta_{i,j}=\theta_j \\ & \theta_{i,j} \in [0,1]
    \end{array}\end{cases} $
    
    \State solve $\lp$ \Comment{Solve for optimal parameters $\theta_{i,j}$}
    
    \ElsIf{$n,m$ are product nodes}
    \State $r \gets $ new product node
    \ForAll{$c_1 \in n$.children, $c_2 \in m$.children \textbf{where} $\scope(c_1)=\scope(c_2)$}
        \State add \textsc{Couple}($c_1,c_2$) to $r$.children 
    \EndFor
    \EndIf
    
    \State \textbf{return} $r$
\end{algorithmic}
\end{algorithm}

\paragraph{Recursive Computation of the Optimal Coupling Circuit Parameters for $\circDist_p$}
Leveraging the recursive properties of the Wasserstein objective, we can compute the optimal parameters of the coupling circuit by solving a small linear program at each sum node. Algorithm~\ref{algimp:couple-cw} details the construction of a coupling circuit (illustration in Figure~\ref{fig:couplingcircuits}) and finding the optimal parameters to compute $\circDist_p$.

Specifically, we wish to find $\min_{\theta} g(n)$ where $g(n)$ can be written recursively as in Equation~\ref{eq:cw-obj}. By this definition, at sum nodes we can minimize the Wasserstein objective at each child independently then solve a linear program using the objective value at the child nodes as constants: given the optimal $g(c)$ for each child node $c$ of $n$, we can rewrite $\min_\theta g(n)=\min_\theta \sum_{c \in \ch(n)} \theta_c g(c)$ to see that solving for the sum node parameters reduces to solving a linear program. We can decompose the optimization problem this way because the optimization at children are independent of the parent parameters.
At a product node, we can again push the problem down to the children: $\min_{\theta} \sum_{c\in\ch(n)} g(c) = \sum_{c\in\ch(n)} (\min_\theta g(c))$, because the children nodes $c\in\ch(n)$ have disjoint scopes due to decomposability and thus do not share any parameters.

Since the time to solve the linear program at each sum node depends only on the number of children of the sum node, which is bounded, we consider this time constant when calculating the runtime of the full algorithm. Thus, we can compute $\circDist_p$ and the corresponding transport plan between two circuits in time linear in the size of the coupling circuit, or equivalently, quadratic in the size of the original input circuits.
Appendix~\ref{proof:exactcw} presents correctness proof of the algorithm in more detail.

Compatibility has been shown to enable tractable algorithms for many pairwise queries on PCs, including information-theoretic divergences such as the Kullback-Leibler, Jensen-Shannon, and R\'enyi divergences~\citep{vergari2021atlas}. Interestingly, while such divergences are commonly considered to be easier to compute than the Wasserstein distance (as they do not require solving an optimization problem), $\circDist_p$ computation is actually \emph{easier}, as algorithms to compute the former all require an additional structural property called \emph{determinism}. Furthermore, while our algorithm for $\circDist_p$ requires the two circuits to be compatible, any pair of arbitrary non-compatible circuits may be transformed into two structured-decomposable circuits and then made compatible---albeit with a worst-case exponential increase in circuit size \citep{decolnet2021compilationsuccinctnessresultsarithmetic, zhang2024restructuringtractableprobabilisticcircuits}. Lastly, current state-of-the-art PC learning algorithms naturally allow us to learn compatible circuit structures---assuming we assign the bijective mapping ourselves \citep{strudel, hclt}.

\subsection{Relation to Distance between GMMs} \label{sec:otgmms}

As probabilistic circuits with Gaussian input distributions can be interpreted as deep, compact representations of Gaussian mixture models (GMMs), existing works studying optimal transport for GMMs~\citep{chen2018optimal,delon2020wasserstein} are highly relevant.
In particular, our proposed notion of Circuit Wasserstein distance is closely related to the Mixture Wasserstein distance ($\gmmDist_2$) introduced by \citet{delon2020wasserstein}, who also derived an upper bound on the true Wasserstein distance by restricting the coupling set to a GMM structure with quadratic number of components and computed this metric by solving a linear program.

We can directly leverage this algorithm to compute a bound on $p$-Wasserstein distance between PCs. Specifically, we can ``unroll'' PCs with Gaussian inputs into their shallow representations which correspond to GMMs and them compute $\gmmDist_p$ between those unrolled GMMs. 
However, the shallow representation of a PC may be exponentially larger than the size of the original circuit, making this naive approach intractable; nevertheless, we consider this approach as a baseline for our proposed approach and provide a detailed runtime comparison in Section~\ref{sec:experiments}.
Furthermore, we observe that $\gmmDist_p$ will be no larger than our proposed $\circDist_p$ because a coupling circuit can also be unrolled into a GMM and thus must be in the coupling set for $\gmmDist_p$.

\section{PC Parameter Learning using Wasserstein Distance}
\label{sec:learning}

Motivated by past works that train generative models by minimizing the Wasserstein distance between the model and the empirical data distribution~\citep{routgenerative,salimans2018improving,tolstikhin2018wasserstein,arjovsky2017wgans}, we investigate the applicability of minimizing the Wasserstein distance between a PC and data as a means of learning the parameters of a given PC structure.

Formally, suppose we have a dataset $\data=\{\y^{(k)}\}_{k=1}^n$ that induces the empirical probability measure $\hat{Q}$. Then for a given PC structure, we find its parameters $\theta$ to optimize the following:
\begin{align} \min_{\theta} \wassDist_p^p(P_\theta, \hat{Q}) = \min_{\theta} \inf_{\gamma \in \Gamma(P_\theta,\hat{Q})} \Ex_{\gamma(\x,\y)}[\norm{\x-\y}_p^p] \nonumber\\ = \min_{\theta} \inf_{\gamma \in \Gamma(P_\theta,\hat{Q})} \frac{1}{n} \sum_{k=1}^n \Ex_{\gamma(\x|\y^{(k)})}\left[\norm{\x-\y^{(k)}}_p^p\right]. \label{eq:emp-wp-dist} \end{align}

Note that 
Equation~\ref{eq:emp-wp-dist} comes from rewriting $\gamma(\x,\y)=\gamma(\x|\y)\gamma(\y)$, then applying linearity of expectation since $\hat{Q}$ is an empirical distribution.
Unfortunately, solving the above optimization problem is computationally hard.

\begin{thm}\label{thm:mwhard}
    Suppose $P_{\theta}$ is a smooth and decomposable probabilistic circuit, and $\hat{Q}$ is an empirical distribution induced by a dataset $\data=\{\y^{(k)}\}_{k=1}^n$. Then computing the parameters $\theta$ that minimizes the empirical Wasserstein distance $\wassDist_p^p(P_\theta, \hat{Q})$ (i.e., solving Equation~\ref{eq:emp-wp-dist}) is NP-hard.
\end{thm}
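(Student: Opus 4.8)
### Proof Proposal

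The plan is to reduce from an NP-hard combinatorial problem — a natural candidate is \textsc{Partition} or, more directly, the decision version of minimum bisection / exact set cover — exploiting the fact that the inner infimum in Equation~\ref{eq:emp-wp-dist} is itself a (tractable) linear program once $P_\theta$ is fixed, so the hardness must come from the choice of $\theta$. Concretely, I would take $p = \infty$ (or $p=1$; the construction should be robust to the choice of $p$ so long as we are careful with the encoding of distances) and build a small, fixed PC \emph{structure} whose free parameters $\theta$ encode a combinatorial selection, together with a dataset $\data$ carefully placed in $\real^n$ so that the empirical Wasserstein distance $\wassDist_p^p(P_\theta,\hat Q)$ is below a threshold $t$ if and only if the combinatorial instance is a yes-instance.

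The key steps, in order: (1) Fix the source NP-hard problem — I lean toward reducing from \textsc{Subset-Sum}/\textsc{Partition} since the sum-node parameters $\theta$ are continuous and normalized (a simplex), which mirrors the LP-relaxation flavor of partition-type problems; alternatively, to get integrality ``for free,'' reduce from a problem where a deterministic/atomic structure is forced. (2) Construct the PC: use a single smooth, decomposable structure with one root sum node (or a shallow mixture) over product nodes, where each product node factorizes into univariate input nodes that are Dirac-like (point masses, or very concentrated categoricals) placed at prescribed locations; the mixture weights $\theta$ are the only things being optimized. (3) Construct the dataset $\data = \{\y^{(k)}\}$ so that each data point's nearest feasible ``transport target'' in $P_\theta$'s support is cheap exactly when $\theta$ concentrates mass according to a valid combinatorial solution, and expensive otherwise. (4) Show the forward direction: a yes-instance yields a $\theta$ making $\wassDist_p^p(P_\theta,\hat Q)\le t$ — here I invoke that the inner optimal transport is an LP over the finite support, so I can exhibit an explicit coupling. (5) Show the reverse direction: if $\wassDist_p^p(P_\theta,\hat Q) \le t$ for some $\theta$, then the marginal-matching constraints of Equation~\ref{eq:marginal-constraint} force $\theta$ (or a rounding of it) to realize a valid combinatorial solution — this is where I'd use a gap argument, choosing the point locations so any ``fractional'' $\theta$ incurs cost strictly above $t$. (6) Check that the whole construction (circuit size, dataset size, numbers with polynomially many bits) is polynomial in the input, and that $P_\theta$ is genuinely smooth and decomposable.

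The main obstacle will be step (5): engineering the geometry so that the \emph{continuous} relaxation afforded by the simplex of sum-node weights does not give the optimizer a cheap fractional escape hatch. In a naive construction, splitting mass fractionally across several mixture components could let the coupling LP average distances down below the threshold even when no integral solution exists — precisely the phenomenon that makes LP relaxations of partition-type problems easy. I would address this either by (a) making the relevant distances equal to a common value on ``good'' configurations but forcing, via the marginal constraints, that any mixed configuration must route some unit of mass over a strictly longer distance (a combinatorial ``no free lunch'' gadget), or (b) reducing instead from a problem that is hard even in a setting where the optimal $\theta$ is provably integral — e.g., by adding auxiliary dimensions/data points that act as penalty terms pinning each $\theta_i$ to $\{0,1\}$ at optimum. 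A secondary, more routine subtlety is handling the choice of $p$: for $p=\infty$ one must argue with the max-distance objective (which actually helps, since it is less forgiving of fractional spreading), while for finite $p$ one must verify the threshold arithmetic; I expect $p=\infty$ to give the cleanest reduction and would present that, remarking that analogous constructions handle general $p$.
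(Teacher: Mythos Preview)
Your proposal takes a substantially different and harder route than the paper. The paper reduces from $k$-means clustering (known to be NP-hard even in the plane, \emph{cf.}\ Dasgupta 2008): given points $x_1,\dots,x_n\in\real^d$ and $k$, build a PC whose root is a sum node with $k$ children, each a product of $d$ univariate Gaussian input nodes. The learnable parameters $\theta$ comprise \emph{both} the sum-node weights \emph{and} the Gaussian means. With variances driven to zero (or simply ignored, since they contribute a constant additive term), minimizing $\wassDist_2^2(P_\theta,\hat Q)$ over $\theta$ is exactly the $k$-means objective: each data point is routed to one component, the optimal center for a group is its mean, and the optimal sum-node weight is the group's proportion. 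No gap argument is needed because $k$-means is \emph{already} a continuous optimization problem that is NP-hard.

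The obstacle you correctly identify in step~(5) --- that fractional sum-node weights might let the LP coupling average costs below the threshold --- is precisely what the paper's choice of source problem sidesteps. You are encoding a discrete yes/no selection into the simplex of mixture weights and then fighting to recover integrality; the paper instead encodes the hard problem into the \emph{input-node parameters} (the Gaussian means), which are unconstrained reals, and picks a source problem ($k$-means) whose hardness already lives in that continuous space. Your plan is not wrong in principle --- a \textsc{Partition}-style reduction with a carefully engineered gap gadget could plausibly be made to work --- but it is doing strictly more work than necessary, and the gap argument you defer is the entire crux. If you want to salvage your approach, the cleanest fix is to change the source problem to one that is NP-hard over continuous decision variables, which brings you straight back to $k$-means or one of its relatives.
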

We can show the above by a reduction from $k$-means clustering (Appendix~\ref{proof:mwhard}).

\subsection{Wasserstein-Minimization: An Iterative Algorithm}

We again tackle this computational hardness by imposing a circuit structure on the coupling measure, allowing us compute the Wasserstein objective and optimize it efficiently.

\begin{defn}[Empirical Circuit Wasserstein distance]\label{def:ecw}
    Let $P$ be a PC distribution and $\hat{Q}$ an empirical distribution induced by a dataset $\data=\{\y^{(k)}\}_{k=1}^n$. The $p$-Empirical Circuit Wasserstein distance between $P$ and $\hat{Q}$ is
    \begin{equation*}
        \emph{\empCircDist}_p^p(P,\hat{Q}) = \min_{\gamma} \frac{1}{n} \sum_{k=1}^n \Ex_{\gamma(\x|\y^{(k)})}\left[\norm{\x-\y^{(k)}}_p^p\right],
    \end{equation*}
    where $\gamma(\x,\y^{(k)})$ satisfies the following: (i) for each $k\in \{1,\dots,n\}$, $\gamma(.,\y^{(k)})$ is a PC with the same structure as $P$ (but not necessarily the same parameters) that normalizes to $1/n$, and (ii) $\sum_{k=1}^n \gamma(\x,\y^{(k)}) = P(\x)$.
\end{defn}
A coupling satisfying the above structure clearly satisfies the marginal constraints and is in $\Gamma(P,\hat{Q})$. Therefore, the empirical Circuit Wasserstein distance upper-bounds the empirical Wasserstein distance: $\wassDist_p(P, \hat{Q}) \leq \empCircDist_p(P,\hat{Q})$. We will thus learn the parameters of PCs by minimizing this upper bound, which can be computed efficiently as follows.

We now present our \emph{iterative algorithm} for minimum-Wasserstein parameter learning. 
In particular, we wish to learn the circuit parameters $\theta$ that minimizes $\empCircDist_p^p(P_\theta,\hat{Q})$ which is in turn a minimization problem over couplings $\gamma$. Thus, we alternate between (i) optimizing the coupling given the current circuit parameters and (ii) updating the circuit parameters given the current coupling.

Let us first discuss step (i) which computes $\empCircDist_p^p(P_\theta,\hat{Q})$ for a given $\theta$ and in the process finds the corresponding coupling $\gamma$. First, rather than materializing a PC to represent $\gamma(.,\y^{(k)})$ for each $k$ as described in Definition~\ref{def:ecw}, we can equivalently model a single coupling circuit $\gamma$ as having the same structure as $P$ and a set of parameters $\{w_{r,c,k}\}_{k=1}^n$ for each parameter $\theta_{r,c}$ in $P$.
Then optimizing the coupling circuit parameters amounts to minimizing the expected distance according to the coupling distribution, similar to computing $\circDist$, and can be done efficiently by solving a small linear program at each sum node. Here, we have the following marginal-matching constraints: $\sum_{k} w_{r,c,k} \!=\! \theta_{r,c}$ for each sum node $r$ and child $c$ and $\sum_c w_{r,c,k}\!=\! 1/n$ for each~$k$.

Interestingly, the above linear program at each sum node is a variation of the continuous knapsack problem \citep{ctsknapsack} and thus has a closed-form solution. In particular, the solution results in a coupling circuit with each weight $w_{c,k}$ being either $\frac{1}{n}$ or zero (details in Appendix~\ref{pf:closedform}). Intuitively, the coupling circuit parameters $w$ describe how each data point is routed through the circuit; because the optimal coupling is deterministic---each data point is either routed wholly through an edge or not at all---we obtain a \textit{transport map} between the learned PC and empirical distribution.

Next, we discuss step (ii) which estimates the parameters $\theta$ of PC $P$ from a given coupling $\gamma$. Because the coupling has the same structure as $P$, and its weights $\{w_{r,c,k}\}$ satisfy marginal-matching constraints, we can simply extract the PC parameters: $\theta_{r,c} = \sum_{k=1}^n w_{r,c,k}$.

The above two steps are repeated iteratively until convergence; a pseudocode for the complete algorithm is provided in Appendix~\ref{alg:minwass}). 
Due to the closed-form solution of the LP, the time complexity of one iteration of our algorithm is linear in both the size of the circuit and the size of the dataset, and our algorithm is also guaranteed to converge (potentially to a local minimum) as the empirical Wasserstein objective is non-increasing in every iteration 
(Appendix~\ref{pf:monoton}). Nevertheless, finding the global optimum parameters minimizing the Wasserstein distance is still NP-hard, and our proposed efficient algorithm may get stuck at a local minimum, similar to existing maximum-likelihood parameter learning approaches.

In an effort to avoid getting stuck at a poor local minimum, we also introduce a variant of Wasserstein-Minimization called Stochastic Wasserstein-Minimization. Simply, instead of routing each data point optimally every time, we route each data point optimally with probability $1-p$ and randomly with probability $p$.

We briefly remark on interesting parallels between our proposed Wasserstein-Minimization (WM) methods and Expectation-Maximization (EM) for maximum-likelihood parameter learning. EM is an iterative algorithm that alternates between (i) computing the expected likelihood (marginalizing out the latent variables) of current parameters in the E-step and (ii) estimating the parameters that maximize this in the M-step, which is analogous to the two steps of WM: (i) computing the ECW for current parameters and (ii) updating the parameters to minimize ECW.

\vspace{-3pt}
\section{Experiments}
\label{sec:experiments}

In this section, we empirically evaluate our proposed algorithm for computing $\circDist_p$\footnote{Implementation and experiments can be found here: \url{https://github.com/aciotinga/pc-optimal-transport}} against the algorithm for computing $\gmmDist_p$ proposed by \citet{delon2020wasserstein}, the exact computation of $\wassDist_p$ via solving a linear optimization problem, \ac{and an approximation of $\wassDist_p$ via the Sinkhorn algorithm~\citep{sinkhorn}---an approximate optimal transport solver between point clouds}. We then compare the modeling capabilities of circuits learned using our proposed Wasserstein-Minimization (WM) algorithm against the Expectation-Maximization (EM) algorithm for PCs. Specifically, we aim to answer the following questions:
\vspace{-5pt}
\begin{enumerate}
    \item \label{q1} How does the runtime of our algorithm for $\circDist_p$ scale with the size of the circuit in practice, and how does that compare to $\gmmDist_p$ and $\wassDist_p$ computation?
    \item \label{q2} How effective is $\circDist_p$ as a proxy metric for $\wassDist_p$?
    \item How useful is the transport plan given by a coupling circuit?
    \item How do circuits learned using our WM algorithm compare with those learned using EM?
\end{enumerate}

\begin{figure}
    \centering
    \includegraphics[width=0.47\linewidth]{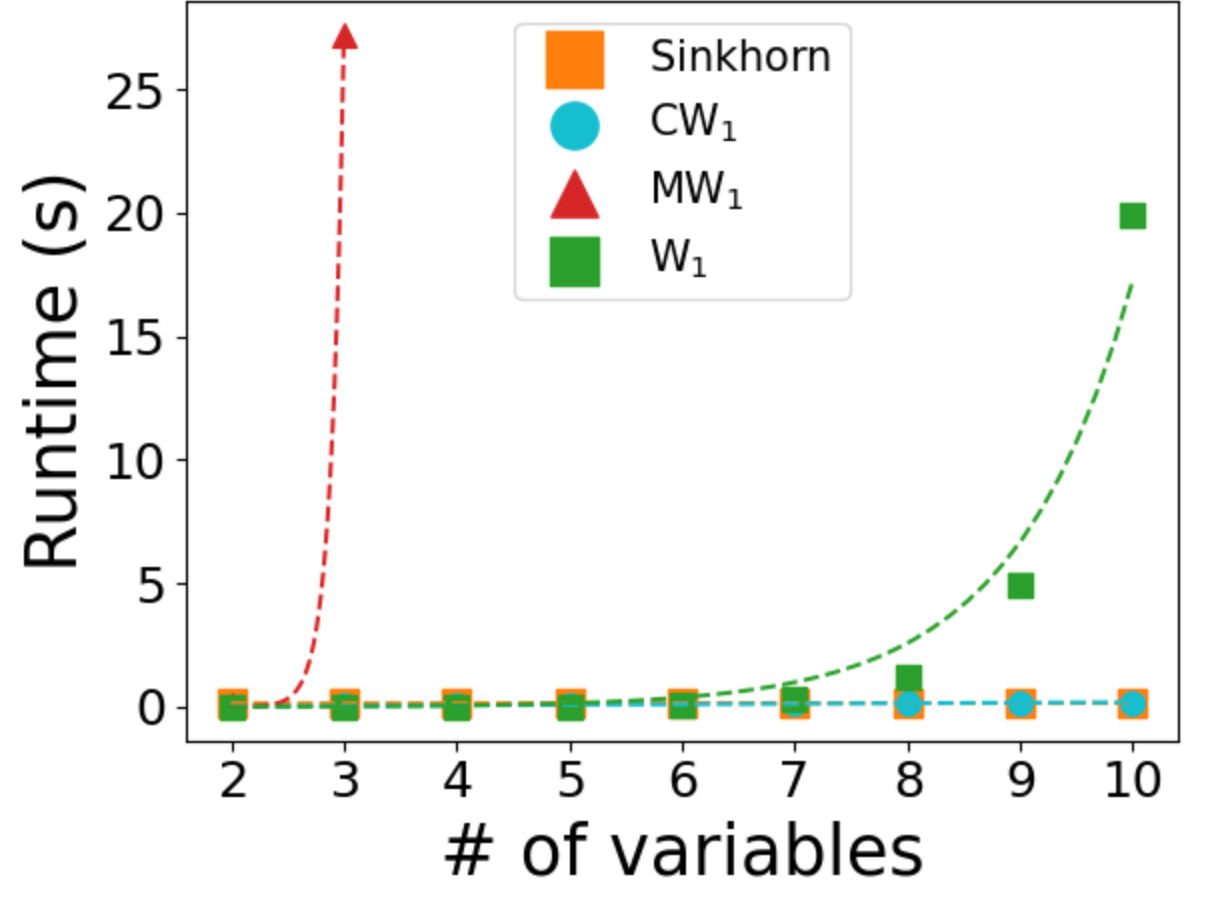}
    \hspace{5pt}
    \includegraphics[width=0.47\linewidth]{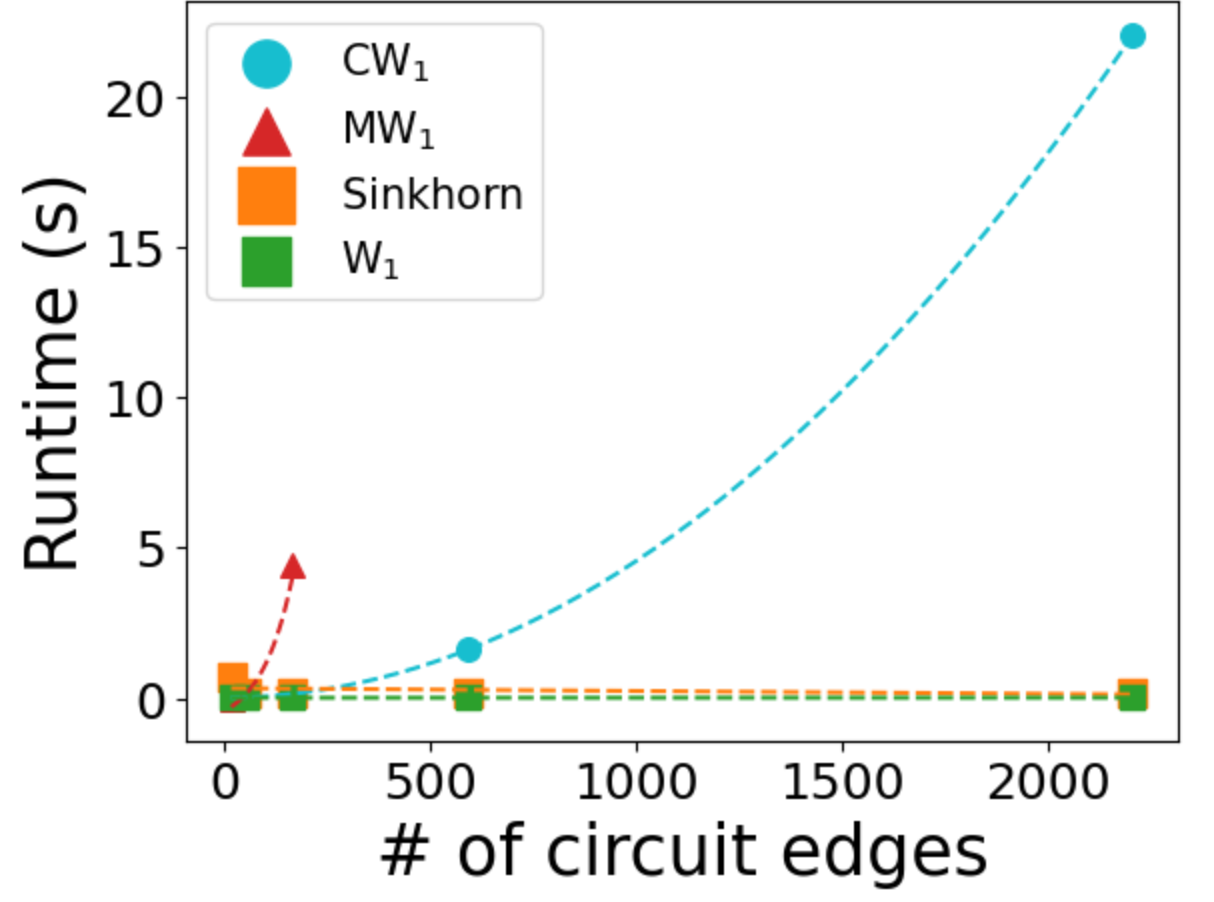}
    \caption{\small Runtime of Wasserstein-type distance computation using our approach (blue dots) and the baselines ($\gmmDist_1$ red triangles, $\wassDist_1$ green squares, and Sinkhorn distance with orange squares). \textit{Left:} Fixed $k=4$, variable $v$. \textit{Right:} Fixed $v=2$, variable $k$. Each data point is averaged over 20 runs. See Appendix~\ref{sec:moreruntime} for more detailed experimental results.
    }
    \label{fig:runtimecomparison}
\end{figure}

\subsection{Efficiency of \texorpdfstring{$\circDist_p$}{CWp} Computation}\label{sec:runtime}

To evaluate the runtime of computing $\circDist_p$ against $\gmmDist_p$ and $\wassDist_p$, we randomly generate pairs of circuits with Bernoulli input distributions with a given variable scope size $v$ and sum node branching factor $k$. To do this, we first construct a hierarchical scope partitioning given the variable scope size, then construct two PCs following this scope partitioning (implying that they are compatible) with sum node branching factor $k$. These circuits mirror the structures learned using a state-of-the-art structure learning algorithm---HCLT~\citep{hclt}---where $k$ corresponds to the block size of the HCLT structure. \ac{Estimating $\wassDist_p$ using the Sinkhorn algorithm is done by sampling $n$ data points from each PC and computing the Sinkhorn distance between the samples.}

We implement our algorithm as detailed in Algorithm \ref{algimp:couple-cw} to compute the optimal transport map and value for $\circDist_p$. For our baselines, (i) we implement a PC-to-GMM unrolling algorithm and employ the algorithm proposed by \citet{delon2020wasserstein} to compute $\gmmDist_p$ between the unrolled circuits, and (ii) we also compute $\wassDist_p$ between PCs representing discrete distributions by enumerating the likelihood of every variable assignment for each circuit and solving a linear program to get the exact Wasserstein distance.

The results are summarized in Figure~\ref{fig:runtimecomparison}. We observe that the time to compute $\wassDist_p$ remains effectively constant in $k$ but grows exponentially in $v$, as the number of joint assignments and thus the size of the linear program grows exponentially in $v$. The time to compute $\gmmDist_p$ grows exponentially in both $v$ and $k$, while the time to compute $\circDist_p$ grows linearly in $v$ and quadratically in $k$. Therefore, $\circDist_p$ is the only metric that can be efficiently computed for any circuit. 

We further note that the exponential increase in problem size for $\gmmDist_p$ and $\wassDist_p$ introduces both numerical stability and out-of-memory issues on a machine with 256Gb of RAM when computing the metrics for larger circuits, as an exponentially larger linear optimization problem must be solved. Conversely, this is never the case for computing $\circDist_p$, as the required linear programs remain small with only $k^2$ variables and do not depend on the circuit size. See Figure~\ref{fig:numerical} for more details.

\begin{figure}
    \centering
    \includegraphics[width=0.5\linewidth]{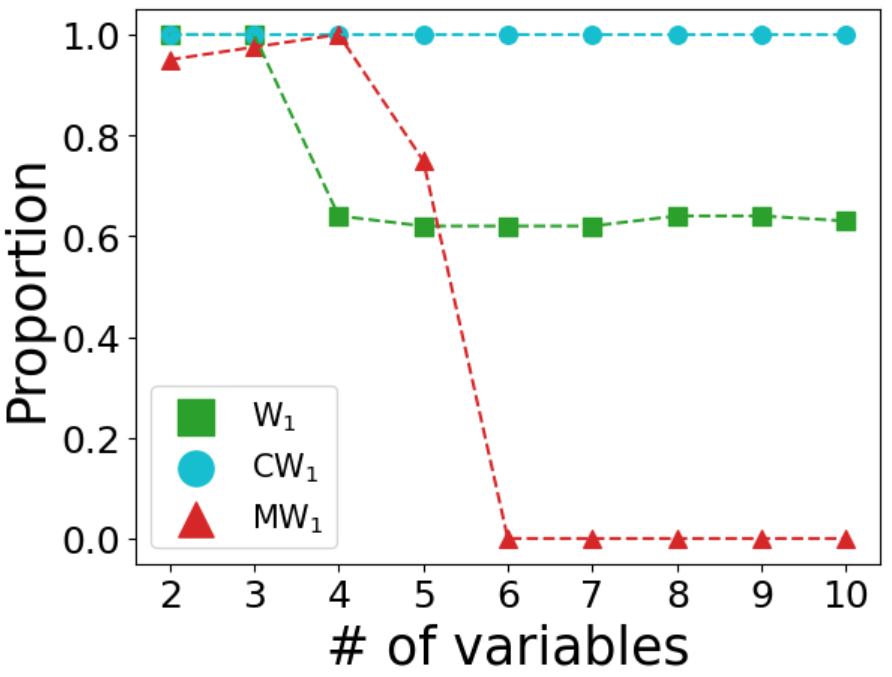} 
    \caption{\small Proportion of instances that $\circDist_1$, $\gmmDist_1$, and $\wassDist_1$ could be solved without numerical stability or OOM issues. Note that only $\circDist_1$ could be computed exactly for every circuit pair.}
    \label{fig:numerical}
\end{figure}

The choice of Bernoulli leaves is intentional to allow $\wassDist_p$ to be used as a baseline; while the choice of leaf distribution is irrelevant for $\gmmDist_p$ and $\circDist_p$ computation time, choosing a leaf distribution with even 3 categories would render most of the $\wassDist_p$ experiments performed above impractical due to the requirement of enumerating every variable assignment for the circuit. Furthermore, it is impossible to compute $\wassDist_p$ in this manner using \emph{any} continuous distribution for the leaves, while $\gmmDist_p$ and $\circDist_p$ can still be computed in the same amount of time. See Appendix~\ref{sec:moreruntime} for runtime experiments with circuits with Gaussian input distributions.

\subsection{\texorpdfstring{$\circDist_p$}{CWp} as a Proxy Metric for \texorpdfstring{$\wassDist_p$}{Wp}}

It is well-established that $\wassDist_p$ is a useful metric for comparing distances between distributions despite it often being impractical to compute; thus, a natural question is whether this utility extends to $\circDist_p$.

\paragraph{$\circDist_p$ Between Randomly-Generated Circuits}
\ac{We first evaluate $\circDist_p$ between pairs of small, randomly-generated circuits for which $\wassDist_p$ can be computed exactly in order to determine how similar our metric $\circDist_p$ ranks the distances between these pairs when compared to $\wassDist_p$. Intuitively, the closer these rankings are, the better $\circDist_p$ is as a proxy for $\wassDist_p$. Thus, for each of the $n$ circuit pairs randomly generated using the same procedure as in Section~\ref{sec:runtime}, we compute two rankings---one ranking by $\circDist_1$ and the other by $\wassDist_1$---and compute the Kendall rank correlation coefficient $\tau$ between the two rankings.}
Across all settings for $v$ and $k$ for which $\wassDist_1$ was computable without running into numerical stability issues (30 out of 45 instances), the Kendall rank correlation coefficient $\tau$ was at least 0.52, with the mean of $0.70$ and variance of $0.007$. This indicates a strong correlation between the distance rankings of $\circDist_1$ and $\wassDist_1$.

\ac{We also investigate the Pearson correlation between $\circDist_1$ and $\wassDist_1$, and compare it to the Pearson correlation between the Sinkhorn distance and $\wassDist_1$. Across all settings for $v$ and $k$ not encountering numerical stability issues, the average correlation between $\circDist_1$ and $\wassDist_1$ was \textbf{0.90}, while the average correlation between the Sinkhorn distance and $\wassDist_1$ was \textbf{0.29} (with the highest correlation being 0.69).}

\paragraph{$\circDist_p$ Between Circuits Learned from Data}
\begin{figure}
    \centering
    \includegraphics[width=0.8\linewidth]{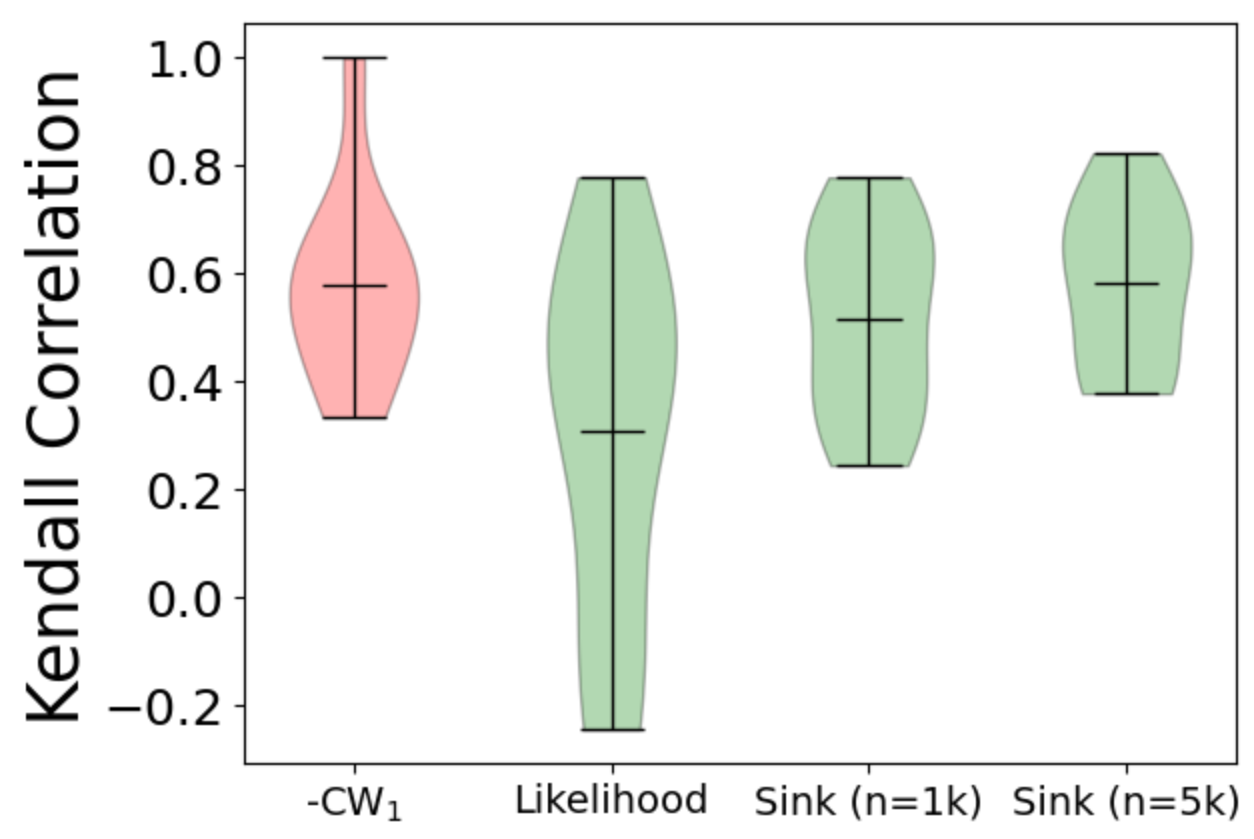} 
    \caption{\small \ac{Distributions of Kendall correlation coefficients between Cosine Similarity and $-\circDist_1$ (far left), likelihood (center left), negative Sinkhorn distance computed with $n=1000$ samples (center right), and negative Sinkhorn distance computed with $n=5000$ samples (far right). Higher is better.}
    }
    \label{fig:mnistexps}
\end{figure}

To support the efficacy of our metric and algorithm on large, high-dimensional PCs beyond randomly-generated circuits, we also evaluate on PCs learned on the MNIST dataset~\citep{mnist}---a 784-dimensional handwritten digits image dataset. Specifically, we first partition the dataset into 10 digit classes, then learn one PC per class using the HCLT structure learning algorithm~\citep{hclt} with a fixed block size of 4 (resulting in each circuit having over 11k edges) and EM parameter learning~\citep{circuitem} implemented in PyJuice~\citep{liu2024scalingtractableprobabilisticcircuits}.
\ac{Between each pair of circuits, we then compute $\circDist_1$ (taking $<5s$ per pair), the Sinkhorn distance using both 1k and 5k samples per pair (taking 1.62 and 8.51 seconds per pair, on average)}, as well as the average likelihood of each class-partitioned dataset for each PC. Lastly, we also compute the average \textit{cosine similarity (CS)} between dataset partitions.
\ac{Figure~\ref{fig:mnistexps} illustrates how closely correlated (according to Kendall rank correlation coefficient) the ranking by $-\circDist_1$, negative Sinkhorn distance (computed with both 1k and 5k samples), and average likelihood are to the ranking based on average CS.} We observe a significantly stronger and consistently positive correlation between $\circDist_1$ and CS rankings, when compared to that between likelihood and CS rankings. \ac{Furthermore, we observed that the variance of Sinkhorn distance is sensitive to the number of samples taken from each PC, with mean distances of 2856.2 and 2841.2 and mean standard deviations of 53.4 and 87.0 using 1k and 5k samples per pair, respectively.}

\subsection{Case Study: Color Transfer using Circuit Transport Maps}

\begin{figure}
    \centering
    \includegraphics[width=0.88\linewidth]{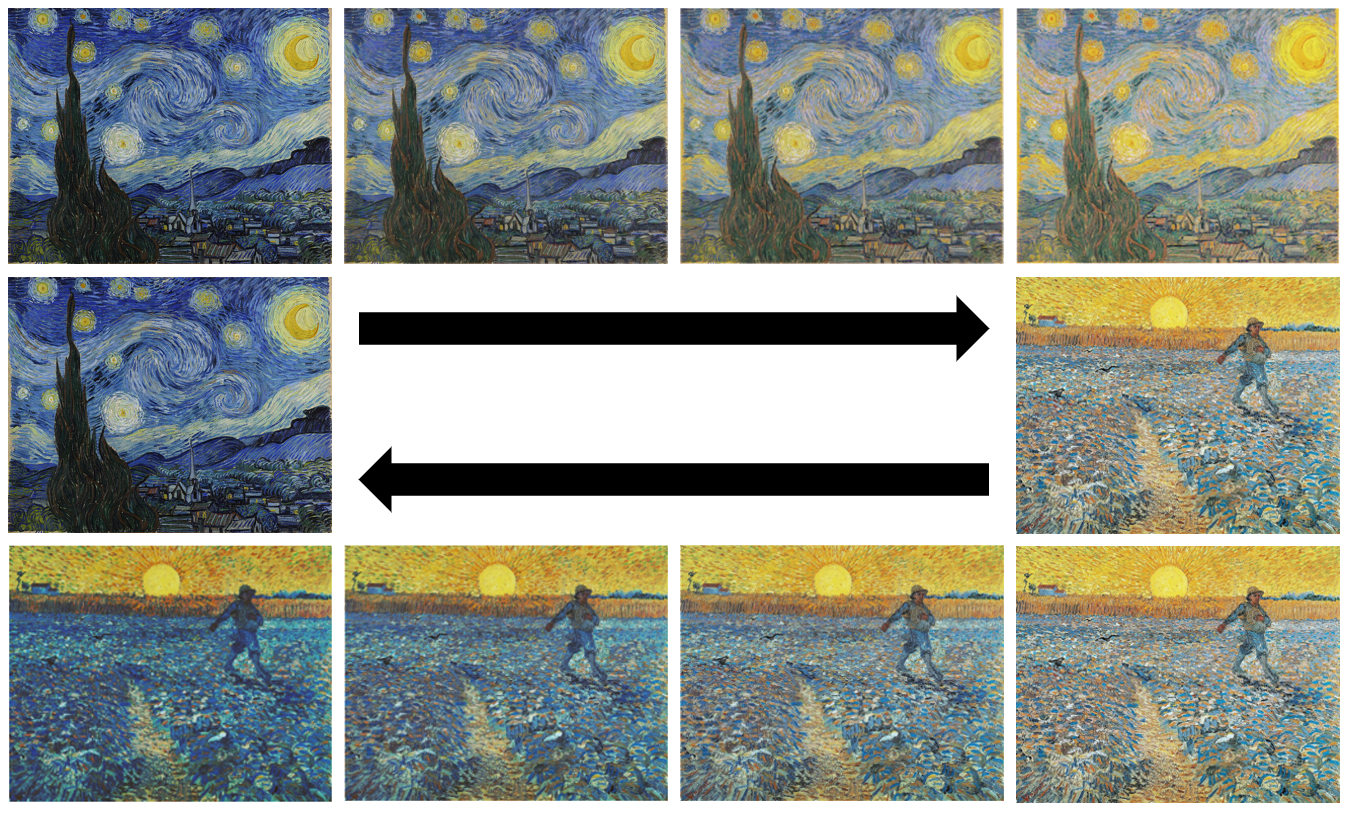}
    \caption{\small Color transfer between images along geodesics using coupling circuits, for $t=0, \frac{1}{3}, \frac{2}{3}$, $1$ in the direction of arrows.}
    \label{fig:colortransfer}
\end{figure}

To evaluate the practicality of transport maps given by our coupling circuits, we adopt an application of optimal transport shown by \citet{delon2020wasserstein}, whereby we transport the \textit{color histogram}---the 3-dimensional probability distribution of pixel color values---of image $a$ to that of another image $b$. To do this, we learn compatible PCs $P(\X)$ and $Q(\Y)$ over the color distributions of images $a$ and $b$, compute the optimal coupling circuit $C(\X,\Y)$, and transport each pixel with color value $\x$ to the corresponding pixel $\y=\Ex_C[\Y|\X=\x]$ (which can be computed tractably). Furthermore, we can transport each pixel value along the geodesic with points in $\x + t(\y-\x)$ for $t \in [0,1]$. By doing this for a fixed $t$ for every pixel in the image, we can interpolate between two images according to our transport map in the color domain. See Figure~\ref{fig:colortransfer} for two examples, and Appendix~\ref{sec:morecolortransfer} for additional examples.

\subsection{Wasserstein-Minimization for Parameter Learning}

\begin{table}
    \centering
    \scalebox{0.87}{
\begin{tabular}{ |c||c|c|c|c|c|c| } 
\hline
& \multicolumn{2}{|c|}{EM Circuit} & \multicolumn{2}{|c|}{Deterministic WM} & \multicolumn{2}{|c|}{Stochastic WM} \\
\hline
$k$ & $\wassDist_2$ & BPD & $\wassDist_2$ & BPD & $\wassDist_2$ & BPD \\
\hline
\hline
4 & 32631 & \textbf{1.414} & 32766 & 1.495 & \textbf{29963} & 1.532 \\ 
16 & 32873 & \textbf{1.242} & 32751 & 1.465 & \textbf{29984} & 1.509 \\ 
64 & 33264 & \textbf{1.192} & 32749 & 1.458 & \textbf{30999} & 1.485 \\ 
128 & 33737 & \textbf{1.175} & 32749 & 1.455 & \textbf{31483} & 1.474 \\ 
256 & 34974 & \textbf{1.172} & 32528 & 1.459 & \textbf{32520} & 1.459\\ 
\hline
\end{tabular}
}
    \caption{\small Comparison of Wasserstein objective value and bits-per-dimension (BPD)---which is proportional to negative log-likelihood---between circuits learned via EM and two variations of WM (our approach), lower is better. The lowest value for each circuit size is bolded.
    }
    \label{table:emvswm}
\end{table}

To determine the performance of our proposed Wasserstein-Minimization algorithm on density estimation tasks, we consider learning the parameters of circuits of various sizes from the MNIST benchmark dataset~\citep{mnist}. We first generate the structure of circuits using the HCLT algorithm~\citep{hclt}, 
varying the ``block size'' for different numbers of parameters.
We then learn three sets of circuit parameters per structure per block size: one using mini-batch EM~\citep{circuitem}, another using our proposed deterministic WM algorithm, and a third using our stochastic WM algorithm with $p=0.1$.
All experiments were ran on a single NVIDIA L40s GPU. We refer to Appendix~\ref{sec:parameterlearningexps} for more details.

Table~\ref{table:emvswm} summarizes the results.
We observe that our algorithms perform nearly as well as EM for learning small circuits (block size 4) benchmarked on likelihoods, and outperform EM at learning circuits with a low Wasserstein objective. However, as the size of the circuits increase, the performance of our algorithms quickly stagnates; empirically, our WM approaches do not seem to take full advantage of the larger parameter space of larger models, with models orders of magnitude larger having better but still comparable performance to their smaller counterparts when computing likelihoods and a higher Wasserstein objective.

\section{Conclusion}

This paper studied the optimal transport problem for probabilistic circuits. We introduced a Wasserstein-type distance $\circDist_p$ between two PCs and proposed an efficient algorithm that computes the distance as well as associated optimal transport plan in quadratic time in the size of the input circuits, provided that they have compatible structures.
We showed that $\circDist_p$ always upper-bounds the true Wasserstein distance, and that---when compared to the naive application of an existing algorithm for computing a Wasserstein-type distance between GMMs to PCs---the former is exponentially faster to compute between circuits. Lastly, we propose an iterative algorithm to minimize the empirical Wasserstein distance between a circuit and data, suggesting an alternative, viable approach to parameter estimation for PCs which is often done using maximum-likelihood estimation. 

We consider this work an initial stepping stone towards a deeper understanding of optimal transport theory for probabilistic circuits. Future directions include exploring more expressive formulations of coupling circuits to obtain a tighter bound on Wasserstein distance---such as relaxing the node-by-node parameter constraints to only require that the whole circuit matches marginal distributions to the original circuits. Our work also leaves open the possibility of extending the marginal-preserving properties of coupling circuits to the multimarginal setting for multimarginal generative modeling with PCs, and computing Wasserstein barycenters for PCs. Moreover, we envision that the tractable computation of a Wasserstein-type distance and transport plan between expressive models such as PCs can lead to further development in various Wasserstein-based machine learning approaches.

\begin{acknowledgements}
    This work was partially supported by a gift from Cisco University Research Program. The authors would like to thank Geunyeong Byeon for helpful discussions on Wasserstein learning.
\end{acknowledgements}

\bibliography{main}

\newpage

\onecolumn

\title{Optimal Transport for Probabilistic Circuits\\(Supplementary Material)}
\maketitle

\appendix\section{Algorithm for Minimum Wasserstein Parameter Estimation}\label{alg:minwass}

Our proposed algorithm is broadly divided into two steps: an inference step and a minimization step. These steps are performed iteratively until model convergence. The inference step populates a cache, which stores the expected distance of each data point at each node in the circuit. This inference step is done in linear time in a bottom-up recursive fashion, making use of the cache for already-computed results. This is provided in algorithm \ref{algimp:inf}.

The minimization step is done top-down recursively, and seeks to route the data at a node to its children in a way that minimizes the total expected distance between the routed data at each child and the sub-circuit. The root node is initialized with all data routed to it. At a sum node, each data point is routed to the child that has the smallest expected distance to it (making use of the cache from the inference step), and the edge weight corresponding to a child is equal to the proportion of data routed to that child; at a product node, the data point is routed to both children. Input node parameters are updated to reflect the empirical distribution of the data routed to that node. The minimization step is thus also done in linear time, and we note that this algorithm guarantees a non-decreasing objective function (see Appendix \ref{pf:monoton} for a proof). The algorithm for this is provided in algorithm \ref{algimp:learn}.

\begin{algorithm}
\caption{\textsc{Inference}$(n, D)$: returns a cache storing the distance between each data point $d_j\in D$ and each sub-circuit rooted at $n$, where $n$ has children $c_i$. For conciseness, we omit checking for cache hits}\label{algimp:inf}

\begin{algorithmic}[1]

\For{$c_i\in n$.children}

\State \textsc{Inference}$(c_i,D)$ \Comment{recursively build cache}

\EndFor

\If{$n$ is a product node}

\For{$d_j\in D$}

\State cache$[n,d_j]\gets \sum_i$cache$[c_i,d_j]$

\EndFor

\EndIf

\If{$n$ is a sum node}

\For{$d_j\in D$}

\State cache$[n,d_j]\gets \sum_i \theta_i$cache$[c_i,d_j]$

\EndFor

\EndIf

\If{$n$ is an input node}

\For{$d_j\in D$}

\State cache$[n,d_j]\gets dist(n,d_j)$ \Comment{here, $dist(n, d_j)$ is the expected distance between $n$ and $d_j$}

\EndFor

\EndIf

\State \textbf{return} cache
\end{algorithmic}
\end{algorithm}

\begin{algorithm}
\caption{\textsc{Learn}$(n, D, \text{cache})$: learns the parameters of circuit rooted at $n$ on data points $d_j\in D$}\label{algimp:learn}
\begin{algorithmic}[1]
    
\If{not all parents of $n$ have been learned}

\State \textbf{return} \Comment{We only call this method on nodes who's parents have all been learned}

\EndIf

\If{$n$ is a product node}

\For{$c_i\in n.$children}

\State \text{routing}$[c_i]\gets \text{routing}[n]$ \Comment{products route their data to their children}

\EndFor

\EndIf

\If{$n$ is a sum node}

\State $\forall \theta_i,$ $\theta_i \gets 0$ \Comment{zero out parameters}

\For{$d_j\in \text{routing}[n]$} 
\State \Comment{route data points at current node to children}
\State $c_i\gets \arg\min_{c_i} \text{ cache[}c_i,d_j]$ \Comment{$c_i$ is the child node of $n$ for which $d_j$ has the lowest distance}
\State routing[$c_i$] $\gets d_j$ \Comment{route $d_j$ to $c_i$}
\State $\theta_i \gets \theta_i + \frac{1}{|\text{routing}[n]|}$ \Comment{update parameter weight}
\EndFor

\EndIf

\If{$n$ is an input node}

\State $n$.parameters $\gets$ parameters matching empirical distribution of routing$[n]$

\EndIf
\end{algorithmic}

\end{algorithm}

\section{Proofs}

\subsection{Hardness Proof of the \texorpdfstring{$\infty$}{∞}-Wasserstein Distance Between Circuits}\label{proof:hardness}

\rethm{thm:w-hardness}{}{
Suppose $P$ and $Q$ are probabilistic circuits over $n$ Boolean variables. Then computing the $\infty$-Wasserstein distance between $P$ and $Q$ is coNP-hard, even when $P$ and $Q$ are \textit{deterministic} and \textit{structured-decomposable}.
}
\begin{proof}
     We will prove hardness by reducing the problem of deciding equivalence of two DNF formulas, which is known to be coNP-hard, to Wasserstein distance computation between two compatible PCs.

     Consider a DNF $\alpha$ containing $m$ terms $\{\alpha_1,\dots,\alpha_m\}$ over Boolean variables $\X$. We will construct a PC $P_\alpha$ associated with this DNF as follows. For each term $\alpha_i$, we construct a product of input nodes---one for each $X\in\X$ whose literal appears in term $\alpha_i$, $\mathds{1}[X=1]$ for a positive literal and $\mathds{1}[X=0]$ for negative. Then we construct a sum unit with uniform parameters over these products as the root of our PC: $P_{\alpha} = \sum_{i=1}^m \frac{1}{m} P_{\alpha_i}$. We can easily smooth this PC by additionally multiplying $P_{\alpha_i}$ with a sum node $\frac{1}{2}\mathds{1}[X=0] + \frac{1}{2}\mathds{1}[X=1]$ for each variable $X$ that does not appear in $\alpha_i$. Furthermore, note that every product node in this circuit fully factorizes the variables $\X$, and thus the PC is trivially compatible with any decomposable circuit over $\X$ and in particular with any other PC for a DNF over $\X$, constructed as above.

     Clearly, the above PC $P_\alpha$ assigns probability mass only to the models of $\alpha$. In other words, for any $\x\in\binary^n$, $P_\alpha(\x) > 0$ iff $\x \models \alpha$ (i.e. there is a term $\alpha_i$ that is satisfied by $\x$).
\end{proof}

\subsection{Proof of the Marginal-Matching Properties of Coupling Circuits}\label{proof:marginalmatching}

\begin{prop}\label{prop:marginalmatching}
    Let $P$ and $Q$ be compatible PCs. Then any feasible coupling circuit $C$ as defined in Def. \ref{def:couplingcircuits} matches marginals to $P$ and $Q$.
\end{prop}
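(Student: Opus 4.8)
The plan is to prove Proposition~\ref{prop:marginalmatching} by structural induction on the coupling circuit $C$, showing the stronger claim that for \emph{every} node $r \in C$ that couples nodes $n \in P$ and $m \in Q$, the sub-circuit rooted at $r$ has marginal distributions exactly $p_n(\x)$ and $q_m(\y)$ --- i.e., $\sum_{\y} p_r(\x,\y) = p_n(\x)$ and $\sum_{\x} p_r(\x,\y) = q_m(\y)$ (with integrals in place of sums in the continuous case). Applying this at the root of $C$ gives the desired marginal-matching with $P$ and $Q$. The induction is on the structure of the paired nodes, which is well-defined since compatibility guarantees that $n$ and $m$ always have matching scope-decomposition structure.

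First I would handle the base case: $r$ is a coupling of input nodes $n, m$, in which case $r$ is the input node encoding the optimal transport plan between the univariate distributions $f_n$ and $f_m$. Since any coupling (in particular the OT plan) has $f_n$ and $f_m$ as its marginals by definition of $\Gamma(f_n, f_m)$, the claim holds. Next, the product-node case: $r$ couples product nodes $n, m$, and by the coupling-circuit construction $r$'s children are $\{\textsc{Couple}(c_1, c_2)\}$ over pairs of corresponding children $c_1 \in \ch(n)$, $c_2 \in \ch(m)$ with $\scope(c_1) \leftrightarrow \scope(c_2)$. By decomposability these children have disjoint scopes partitioning $\scope(r)$, so $p_r(\x,\y) = \prod p_{r_i}(\x_i,\y_i)$; applying the inductive hypothesis to each factor and using that the $p$-norm / distance is separable, marginalizing out $\y$ factorizes across the disjoint blocks and yields $\prod p_{c_1^{(i)}}(\x_i) = p_n(\x)$, and symmetrically for $\y$.

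The main work is the sum-node case: $r$ couples sum nodes $n \in P$ with weights $\{\theta_i\}$ over children $\{c_i\}$ and $m \in Q$ with weights $\{\theta_j\}$ over children $\{d_j\}$, and $r$ is a sum node with weights $\{\theta_{i,j}\}$ over children $\{\textsc{Couple}(c_i, d_j)\}$ subject to the marginal constraints $\sum_i \theta_{i,j} = \theta_j$ and $\sum_j \theta_{i,j} = \theta_i$. Here $p_r(\x,\y) = \sum_{i,j}\theta_{i,j}\, p_{r_{ij}}(\x,\y)$ where $p_{r_{ij}}$ has marginals $p_{c_i}(\x)$ and $q_{d_j}(\y)$ by the inductive hypothesis. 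Then $\sum_\y p_r(\x,\y) = \sum_{i,j}\theta_{i,j}\, p_{c_i}(\x) = \sum_i \big(\sum_j \theta_{i,j}\big) p_{c_i}(\x) = \sum_i \theta_i\, p_{c_i}(\x) = p_n(\x)$, using the constraint $\sum_j \theta_{i,j} = \theta_i$, and symmetrically $\sum_\x p_r(\x,\y) = \sum_j \theta_j\, q_{d_j}(\y) = q_m(\y)$ using $\sum_i \theta_{i,j} = \theta_j$. I expect no genuine obstacle here --- the only subtlety is being careful that the recursion terminates and that compatibility guarantees the pairing of nodes encountered in the induction is always structurally valid (i.e., input-with-input, sum-with-sum, product-with-product with matching scope partitions), which is exactly what Definition~\ref{def:compatibility} provides; smoothness of $P$ and $Q$ ensures the scopes of $n$ and its children coincide so that the marginalization above is over the full shared scope.
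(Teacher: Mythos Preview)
Your proposal is correct and follows essentially the same structural-induction argument as the paper: base case at coupled input nodes (OT plan has the right marginals), product case via decomposability and factorized marginalization, and sum case via the constraints $\sum_j \theta_{i,j}=\theta_i$, $\sum_i \theta_{i,j}=\theta_j$. One small remark: in the product case you mention ``using that the $p$-norm / distance is separable,'' but that fact is irrelevant to marginal matching (it matters for decomposing the Wasserstein \emph{objective}, not for showing $\sum_\y p_r(\x,\y)=p_n(\x)$); all you need there is disjointness of scopes so that the integral over $\y$ factorizes.
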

\begin{proof}
We will prove this by induction. Our base case is two corresponding input nodes $n_1,n_2 \in P,Q$. The sub-circuit in $C$ rooted at the coupling of $n_1$ and $n_2$ is an input node representing the optimal transport plan between $n_1$ and $n_2$, which clearly matches marginals to $n_1$ and $n_2$.

Now, let $n_1$ and $n_2$ be arbitrary corresponding nodes in $P$ and $Q$, and assume that the coupling circuits for all children of the two nodes match marginals. We then have two cases:

\paragraph{Case 1: $n_1,n_2$ are product nodes}
Since the circuits are compatible, we know that $n_1$ and $n_2$ have the same number of children---let the number of children be $k$. Thus, let $c_{1,i}$ represent the $i$'th child of $n_1$, and let $c_{2,i}$ represent the $i$'th child of $n_2$. The coupling circuit of $n_1$ and $n_2$ (denoted $n$) is a product node with $k$ children, where the $i$'th child is the coupling circuit of $c_{1,i}$ and $c_{2,i}$ (denoted $c_i$). 

By induction, the distribution $P_{c_i}(\X,\Y)$ at each child coupling sub-circuit matches marginals to the original sub-circuits: $P_{c_i}(\X)=P_{c_{1,i}}(\X)$, and $P_{c_i}(\Y)=P_{c_{2,i}}(\Y)$. $n_1$ and $n_2$ being product nodes means that $P_{n_1}(\X)=\prod_iP_{c_{1,i}}(\X)$ and $P_{n_2}(\Y)=\prod_iP_{c_{2,i}}(\Y)$, so thus $P_n(\X)=\prod_iP_{c_i}(\X)=\prod_iP_{c_{1,i}}(\X)$ and $P_n(\Y)=\prod_iP_{c_i}(\Y)=\prod_iP_{c_{2,i}}(\Y)$. Therefore, $n$ matches marginals to $n_1$ and $n_2$.

\paragraph{Case 2: $n_1,n_2$ are sum nodes}
Let the number of children of $n_1$ be $k_1$ and the number of children of $n_2$ be $k_2$. Let $c_{1,i}$ represent the $i$'th child of $n_1$, and let $c_{2,j}$ represent the $j$'th child of $n_2$. The coupling circuit of $n_1$ and $n_2$ (denoted $n$) is a sum node with $k_1*k_2$ children, where the $(i,j)$'th child is the coupling circuit of $c_{1,i}$ and $c_{2,j}$ (denoted $c_{i,j}$). 

By induction, the distribution $P_{c_{i,j}}(\X,\Y)$ at each child coupling sub-circuit matches marginals to the original sub-circuits: $P_{c_{i,j}}(\X)=P_{c_{1,i}}(\X)$, and $P_{c_{i,j}}(\Y)=P_{c_{2,j}}(\Y)$. $n_1$ and $n_2$ being sum nodes means that $P_{n_1}(\X)=\sum_i\theta_iP_{c_{1,i}}(\X)$ and $P_{n_2}(\Y)=\sum_j\theta_jP_{c_{2,j}}(\Y)$, so thus 
\begin{multline}
    P_n(\X)=\sum_i\sum_j\theta_{i,j}P_{c_{i,j}}(\X)=\sum_i\sum_j\theta_{i,j}P_{c_{1,i}}(\X)=\sum_i\theta_iP_{c_{1,i}}(\X)=P_{n_1}(\X)\\   
    P_n(\Y)=\sum_i\sum_j\theta_{i,j}P_{c_{i,j}}(\Y)=\sum_i\sum_j\theta_{i,j}P_{c_{2,j}}(\Y)=\sum_j\theta_jP_{c_{2,j}}(\Y)=P_{n_2}(\Y)
\end{multline} 

Note that we rewrite $\sum_i\theta_{i,j}=\theta_j$ and $\sum_j\theta_{i,j}=\theta_i$ by the constraints on coupling circuits. Therefore, $n$ satisfies marginal constraints.
\end{proof}

\subsection{Proof of the Metric Properties of \texorpdfstring{$\circDist_p$}{CWp}}\label{proof:metric}

\reprop{prop:metric}{Metric Properties of $\circDist_p$}{For any set $\mathcal{C}$ of compatible circuits, $\circDist_p$ defines a metric on $\mathcal{C}$.
}

\begin{proof}
It is clear that $\circDist_p$ is symmetric since construction of the coupling circuit is symmetric. Furthermore, since $\circDist_p$ upper-bounds $\wassDist_p$, it must also be non-negative. 

If $\circDist_p(P,Q)=0$, then $\wassDist_p(P,Q)=0$ so $P=Q$. Any constraint-satisfying assignment of the parameters of a coupling circuit between $P$ and $P$ would also result in the Wasserstein objective at the root node being $0$, since the base-case computation of $\wassDist_p$ at the leaf nodes would always be zero.

Now, we show that $\circDist_p$ satisfies the triangle inequality. Let $P,Q,R\in \mathcal{C}$ be compatible PCs over random variables $\X, \Y,$ and $\rvars{Z},$ and let $d_1=\circDist_p(P,Q),\text{ } d_2=\circDist_p(P,R), $ and $d_3=\circDist_p(R,Q)$ with optimal coupling circuits $C_1, C_2, $ and $C_3$. We can construct circuits $C_2(\rvars{x}|\rvars{z})$ and $C_3(\rvars{y}|\rvars{z})$ that are still compatible with $C_2$ and $C_3$, since conditioning a circuit preserves the structure. Because all of these are compatible, we can then construct circuit $C(\X,\Y,\rvars{Z})=C_2(\rvars{X}|\rvars{Z})C_3(\rvars{Y}|\rvars{Z})R(\rvars{Z})$. Thus, $C$ is a coupling circuit of $P, Q$, and $R$ such that $C_2(\rvars{x},\rvars{y})=\int C(\rvars{x},\rvars{y}, \rvars{z})d\rvars{z}$ and $C_3(\rvars{y},\rvars{z})=\int C(\rvars{x},\rvars{y}, \rvars{z})d\rvars{x}$. Then we have:
\begin{align*}
\circDist_p(P,Q) &=\int \norm{\x-\y}_p^p C_1(\x, \y)d\x d\y
=\int \norm{(\x - \rvars{z}) - (\y - \rvars{z})}_p^p C(\x, \y, \rvars{z})d\x d\y d\rvars{z}\\
&\leq \int \norm{\x - \rvars{z}}_p^p C_2(\x, \rvars{z}) d\x d\rvars{z} + \int \norm{\rvars{z} - \y}_p^p C_3(\y, \rvars{z}) d\y d\rvars{z}\\
&= \circDist_p(P,R) + \circDist_p(R,Q)
\end{align*}
Thus, $\circDist_p$ satisfies the triangle inequality, which concludes the proof.
\end{proof}

\subsection{Recursive Computation of the Wasserstein Objective}\label{proof:wcomp}

Referring to Definition~\ref{def:cw}, the Wasserstein objective for a given coupling circuit $C(\x, \y)$ is the expected distance between $\x$ and $\y$. Below, we demonstrate that the Wasserstein objective at a sum node that decomposes into $C(\x, \y) = \sum_i\theta_iC_i(\x, \y)$ is simply the weighted sum of the Wasserstein objectives at its children:
\begin{align}
    & \Ex_{C(\x,\y)}[\norm{\x-\y}_p^p]
    = \int \norm{\x-\y}_p^p C(\x,\y) d\x d\y 
    = \int \norm{\x-\y}_p^p \sum_i \theta_i C_i(\x,\y) d\x d\y \nonumber\\
    &= \sum_i \theta_i \int \norm{\x-\y}_p^p C_i(\x,\y) d\x d\y 
    = \sum_i \theta_i \Ex_{C_i(\x,\y)}[\norm{\x-\y}_p^p] \label{eq:sumobjective}
\end{align}

Now, consider a decomposable product node, where $C(\x, \y)=C_1(\x_1, \y_1)C_2(\x_2, \y_2)$ \footnote{We assume for notational simplicity that product nodes have two children, but it is straightforward to rewrite a product node with more than two children as a chain of product nodes with two children each and see that our result still holds.}. Below, we see that the Wasserstein objective at the parent is simply the \emph{sum} of the Wasserstein objectives at its children:
\begin{align}
    & \Ex_{C(\x,\y)}[\norm{\x-\y}_p^p]
    = \int \norm{\x-\y}_p^p C(\x,\y) d\x d\y 
    = \int \norm{\x-\y}_p^p C_1(\x_1,\y_1) C_2(\x_2,\y_2) d\x d\y \nonumber\\
    &= \int (\norm{\x_1-\y_1}_p^p + \norm{\x_2-\y_2}_p^p) \times C_1(\x_1,\y_1) C_2(\x_2,\y_2) d\x_1 d\x_2 d\y_1 d\y_2 \nonumber\\
    &= \left( \int \norm{\x_1-\y_1}_p^p C_1(\x_1,\y_1)  d\x_1 d\y_1 \right) +  \left(\int \norm{\x_2-\y_2}_p^p) C_2(\x_2,\y_2) d\x_2 d\y_2 \right) \nonumber\\
    &= \Ex_{C_1(\x_1,\y_1)}[\norm{\x_1-\y_1}_p^p] + \Ex_{C_2(\x_2,\y_2)}[\norm{\x_2-\y_2}_p^p] \label{eq:prodobjective}
\end{align}
Thus, we can push computation of Wasserstein objective down to the leaf nodes of a coupling~circuit.

\subsection{Hardness Proof of Computing Minimum-Wasserstein Parameters}\label{proof:mwhard}

\rethm{thm:mwhard}{}{    
Suppose $P_{\theta}$ is a smooth and decomposable probabilistic circuit, and $\hat{Q}$ is an empirical distribution induced by a dataset $\data=\{\y^{(k)}\}_{k=1}^n$. Then computing the parameters $\theta$ that minimizes the empirical Wasserstein distance $\wassDist_p^p(P_\theta, \hat{Q})$ (i.e., solving Equation~\ref{eq:emp-wp-dist}) is NP-hard.
}
\begin{proof}
    We will prove this hardness result by reducing $k$-means clustering---which is known to be NP-hard \citep{dasgupta2008hard}---to learning the minimum Wasserstein parameters of a circuit. Consider a set of points $x_1...x_n \in \mathbb{R}^d$ and a number of clusters $k$. We will construct a Gaussian PC $C$ associated with this problem as follows: the root of $C$ is a sum node with $k$ children; each child is a product node with $d$ univariate Gaussian input node children (so each product node is a multivariate Gaussian comprised of independent univariate Gaussians). Minimizing the parameters of $C$ over $x_i$ corresponds to finding a routing of data points $x_i$ that minimizes the total distance between all $x_i$'s and the mean of the multivariate Gaussian child each $x_i$ is routed to. A solution to $k$-means can be retrieved by taking the mean of each child of the root sum node to be the center of each of $k$ clusters.
\end{proof}

\subsection{Proof of the Optimality of Coupling Circuit Parameter Learning in Algorithm \ref{algimp:couple-cw}}\label{proof:exactcw}

\begin{prop}\label{prop:exactcw}
    Suppose $P$ and $Q$ are compatible probabilistic circuits with coupling circuit $C$. Then the parameters of $C$---and thus $\circDist_p$---can be computed exactly in a bottom-up recursive fashion.
\end{prop}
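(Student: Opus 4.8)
The plan is to prove the claim by structural induction over the coupling circuit $C$, showing that the bottom-up procedure of Algorithm~\ref{algimp:couple-cw} computes, at every node $n$, the quantity $g^*(n) := \min_{\theta} g(n)$, where $g(n)$ is the $\circDist_p$-objective of the sub-circuit rooted at $n$ and the minimum ranges over all parameterizations of that sub-circuit that are feasible in the sense of Definition~\ref{def:couplingcircuits}. Since the root of $C$ couples the roots of $P$ and $Q$, this yields $\circDist_p^p(P,Q)$ exactly. Two facts do most of the work: (i) the objective decomposes recursively as in Equation~\ref{eq:cw-obj} (proved in Appendix~\ref{proof:wcomp}), so $g(n)$ is, for each child $c$, a monotone affine function of $g(c)$ with a \emph{nonnegative} coefficient; and (ii) compatibility makes the coupling-circuit structure well-defined, and at a coupled sum node the feasibility constraints on the new edge weights $\{\theta_{i,j}\}$ are exactly the transportation-polytope constraints $\sum_i\theta_{i,j}=\theta_j$, $\sum_j\theta_{i,j}=\theta_i$, $\theta_{i,j}\in[0,1]$, which depend only on the fixed parameters of the corresponding nodes in $P$ and $Q$, not on any parameters below $n$.

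Concretely, I would first fix the \emph{recursively optimal parameterization} $\psi^*$ of $C$: processing nodes in reverse topological order, at each coupled sum node set $\{\theta_{i,j}\}$ to an optimal solution of
\begin{equation*}
\min \ \sum_{i,j} \theta_{i,j}\, g(c_{i,j})\bigl(\psi^*\bigr) \quad \text{s.t.} \quad \sum_i \theta_{i,j} = \theta_j,\ \ \sum_j \theta_{i,j} = \theta_i,\ \ \theta_{i,j}\in[0,1],
\end{equation*}
which is well-posed because by the time $n$ is processed every $g(c_{i,j})(\psi^*)$ is an already-determined number. This LP is always feasible (the product weights $\theta_{i,j}=\theta_i\theta_j$ satisfy the constraints) and has a bounded feasible region, so it attains its optimum; being a small LP with only $k_1k_2=O(1)$ variables, it is solved exactly in constant time, which is exactly what Algorithm~\ref{algimp:couple-cw} does, while the input and product cases require no optimization and directly realize the corresponding cases of Equation~\ref{eq:cw-obj}. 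Let $v(n):=g(n)(\psi^*)$ be the value returned at node $n$.

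The core of the argument is the induction showing $v(n)=g^*(n)$ for every node. The inequality $v(n)\ge g^*(n)$ is immediate since $\psi^*$ is feasible. For the reverse I would split into cases. At an input coupling the sub-circuit has no parameters, so $v(n)=g^*(n)=\wassDist_p^p(\cdot,\cdot)$, computable exactly by assumption. At a coupled product node, for \emph{any} feasible $\psi$ we have $g(n)(\psi)=\sum_c g(c)(\psi)\ge \sum_c g^*(c)=\sum_c v(c)=v(n)$, using $g(c)(\psi)\ge g^*(c)$ term by term (decomposability is what makes this product case of Equation~\ref{eq:cw-obj} valid), the inductive hypothesis, and the recursion; hence $g^*(n)\ge v(n)$. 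At a coupled sum node, for any feasible $\psi$ with weights $\{\theta_{i,j}\}$, nonnegativity of the weights and $g(c_{i,j})(\psi)\ge g^*(c_{i,j})=v(c_{i,j})$ give $g(n)(\psi)=\sum_{i,j}\theta_{i,j} g(c_{i,j})(\psi)\ge \sum_{i,j}\theta_{i,j} v(c_{i,j})\ge v(n)$, the last step because $v(n)$ is by construction the optimum of the LP over exactly this transportation polytope; hence $g^*(n)\ge v(n)$. Combining the bounds closes the induction, and applying it at the root gives $\circDist_p^p(P,Q)=v(\text{root})$. Finally I would note that this is a single memoized pass over $C$ (the cache ensures each coupled node is visited once), so the runtime is linear in $|C|$ and hence quadratic in the sizes of $P$ and $Q$.

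I expect the only delicate point to be making the induction robust to the fact that $C$ is a DAG, so sub-circuits rooted at siblings of a sum node may share descendants and their "independent" optima need not be simultaneously realizable. The resolution is precisely the asymmetric split above: the lower bound $g(n)(\psi)\ge\cdots$ holds pointwise for every feasible $\psi$ regardless of sharing, while achievability is witnessed by the single globally recursive choice $\psi^*$ rather than by optimizing each sibling sub-circuit separately, so nothing stronger than the termwise monotonicity $g(c)(\psi)\ge g^*(c)$ is ever required.
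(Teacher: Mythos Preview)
Your proof is correct and follows the same bottom-up structural induction as the paper, handling the input, product, and sum cases separately and reducing the sum case to a small LP over the transportation polytope. If anything, your argument is more rigorous than the paper's: where the paper simply pushes the inner infimum inside the sum with the terse justification ``due to the acyclicity of PCs,'' you make this step precise via the two-sided bound and the single globally constructed $\psi^*$, and you explicitly address the DAG subtlety of shared descendants among siblings that the paper leaves implicit.
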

\begin{proof}
    We will construct a recursive argument showing that the optimal parameters of $C$ can be computed exactly. Let $n \in C$ be some node in the coupling circuit $C$ that is the coupling of nodes $n_1$ and $n_2$ in $P$ and $Q$ respectively. Then we have three cases:

    \paragraph{Case 1:  $n$ is an input node}
    By our assumption that the optimal transport plan between $n_1$ and $n_2$ can be computed in constant-time, we let $n$ be an input node with scope $\{\X_k,\Y_k\}$ representing the optimal transport plan between $n_1$ and $n_2$, and $\circDist_p^p(n)=\wassDist_p^p(n_1,n_2)$.

    \paragraph{Case 2:  $n$ is a product node}
    By recursion, $\circDist_p^p(n)=\sum_i\circDist_p^p(c_i)$ for each child $c_i$ of $n$ (see \ref{eq:prodobjective}).

    \paragraph{Case 3:  $n$ is a sum node}
    Let $\theta_{i,j}$ be the parameter corresponding to the product of the $i$-th child of $n_1$ and $j$-th child of $n_2$. We want to solve the following optimization problem $\inf \Ex_{P_n(\X,\Y)}[\norm{\X-\Y}_p^p]$, which can be rewritten as follows:
    \begin{equation}
        \inf \Ex_{P_n(\X,\Y)}[\norm{\X-\Y}_p^p]=\inf\int_{\mathbb{R}^d\times \mathbb{R}^d}\norm{\X-\Y}_p^pP_n(\X,\Y)d\X d\Y
    \end{equation}

    Rewriting the distribution of $n$ as a mixture of its child distributions $c_{i,j}$, we get:
    \begin{equation}
        =\inf_{\theta,P_{i,j}}\int_{\mathbb{R}^d\times \mathbb{R}^d}\norm{\X-\Y}_p^p\sum_{i,j}\theta_{i,j}P_{c_{i,j}}(\X,\Y)d\X d\Y
    \end{equation}

    Due to linearity of integrals, we can bring out the sum:
    \begin{equation}
        =\inf_{\theta,P_{i,j}}\sum_{i,j}\theta_{i,j}\int_{\mathbb{R}^d\times \mathbb{R}^d}\norm{\X-\Y}_p^pP_{c_{i,j}}(\X,\Y)d\X d\Y
    \end{equation}

    Lastly, due to the acyclicity of PCs, we can separate out $\inf_{\theta_i,P_{i,j}}$ into $\inf_{\theta_i}\inf_{P_{i,j}}$ and push the latter infimum inside the sum.
    \begin{equation}
        =\inf_{\theta}\sum_{i,j}\theta_{i,j}(\inf_{P_{i,j}}\int_{\mathbb{R}^d\times \mathbb{R}^d}\norm{\X-\Y}_p^pP_{c_{i,j}}(\X,\Y)d\X d\Y)
    \end{equation}

    Thus, we can solve the inner optimization problem first (corresponding to the optimization problems at the children), and then the outer problem (the optimization problem at the current node). Therefore, a bottom-up recursive algorithm is exact.
\end{proof}

\subsection{Deriving a Closed-Form Solution to the Linear Programs for Parameter Updates}\label{pf:closedform}

For a sum node $s$ with $m$ children $s_1...s_m$ and a dataset with $n$ data points $d_1...d_n$ each with weight $w_j$, we construct a linear program with $m*n$ variables $x_{i,j}$ as follows:
\begin{align*}
&\text{min} \quad
 \sum_{i=1}^m\sum_{j=1}^n\Ex_{s_i}[\norm{\X-d_j}_2^2]x_{i,j} \qquad
\text{s.t.} \quad
\sum_{i=1}^m x_{i,j} = w_j \text{  } \forall j
\end{align*}
Note that the constraints do not overlap for differing values of $j$. Thus, we can break this problem up into $n$ smaller linear programs, each with the following form:
\begin{align*}
&\text{min} \quad
 \sum_{i=1}^m\Ex_{s_i}[\norm{\X-d_j}_2^2]x_{i,j} \qquad
\text{s.t.} \quad
\sum_{i=1}^m x_{i,j} = w_j
\end{align*}
The only constraint here requires that the sum of objective variables is equal to $w_j$. Thus, the objective is minimized when $x_{i,j}$ corresponding to the smallest coefficient takes value $w_j$ and all other variables take value 0. Thus, the solution to the original linear program can be thought of as assigning each data point to the sub-circuit that has the smallest expected distance to it.

\subsection{Proof that the Wasserstein Minimization Algorithm has a Monotonically Decreasing Objective} \label{pf:monoton}

\begin{prop}
    For a circuit rooted at $n$ and dataset $D$ routed to it, the Wasserstein distance between the empirical distribution of $D$ and sub-circuit rooted at $n$ will not increase after an iteration of algorithm \ref{alg:minwass}
\end{prop}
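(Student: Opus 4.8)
Proof proposal. The plan is to read one iteration of Algorithm~\ref{alg:minwass}---the inference step (Algorithm~\ref{algimp:inf}) followed by the minimization step (Algorithm~\ref{algimp:learn})---as a single round of block-coordinate descent on the empirical Wasserstein objective, over two groups of variables: a \emph{routing} $\pi$, which records for each data point the child selected at every sum node it traverses (and, at input nodes, which points land there), and the circuit parameters $\theta$ (sum-edge weights together with input-node parameters). Writing $F(\pi,\theta)$ for the induced objective $\tfrac1n\sum_{k}\Ex_{\gamma_{\pi,\theta}(\x\mid\y^{(k)})}[\norm{\x-\y^{(k)}}_p^p]$, I would first make precise---using the recursive, separable form of the Wasserstein objective from Appendix~\ref{proof:wcomp} and Equation~\ref{eq:cw-obj}---that $F$ decomposes \emph{additively} over local terms: one per (data point, sum node) governing the edge choice, and one per (routed data point, input node). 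This additivity is what makes the two block minimizations, over $\pi$ and over $\theta$, each split into independent single-node subproblems. It also reduces the claim to the case $n=\text{root}$: conditioned on the dataset $D$ currently routed to an arbitrary node $n$, the part of the iteration acting on the sub-circuit rooted at $n$ is itself one full iteration of the algorithm on $(\text{sub-circuit at }n,\,D)$, so it suffices to prove the non-increase for the whole circuit.

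Next I would establish the two monotonicity inequalities. (a) Given parameters $\theta^{(t)}$, the minimization step routes each data point to the child with the smallest cached expected distance; because $F(\cdot,\theta^{(t)})$ is additive over edge choices and, at each sum node, linear in how a data point's unit of mass is split among children, this greedy routing is the closed-form optimum of the continuous-knapsack linear program (Appendix~\ref{pf:closedform}) and hence a global minimizer of $F(\cdot,\theta^{(t)})$; equivalently, the inference step returns $F(\pi^{(t)},\theta^{(t)})=\empCircDist_p^p(P_{\theta^{(t)}},\hat{Q})$. (b) Given the routing $\pi^{(t)}$, the parameter updates---each sum-edge weight set to the fraction of mass routed along it, each input distribution set to the empirical distribution (for $p=2$, the mean) of the points routed to it---are precisely the per-node minimizers of $F(\pi^{(t)},\cdot)$, so $F(\pi^{(t)},\theta^{(t+1)})\le F(\pi^{(t)},\theta^{(t)})$. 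Chaining these, and using that $\pi^{(t+1)}$ globally minimizes $F(\cdot,\theta^{(t+1)})$, gives $\empCircDist_p^p(P_{\theta^{(t+1)}},\hat{Q})=F(\pi^{(t+1)},\theta^{(t+1)})\le F(\pi^{(t)},\theta^{(t+1)})\le F(\pi^{(t)},\theta^{(t)})=\empCircDist_p^p(P_{\theta^{(t)}},\hat{Q})$, which is the desired non-increase; the per-node version then follows from the reduction in the first paragraph.

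I expect the main obstacle to be step~(a): proving that the greedy, node-local top-down routing attains the global minimum of $F(\cdot,\theta^{(t)})$ over \emph{all} couplings feasible in the sense of Definition~\ref{def:ecw}---i.e.\ that the inference/routing step genuinely computes $\empCircDist_p^p$ rather than merely some feasible value. Two points need care. First, one must argue that splitting a data point's mass across several children of a sum node never helps: this is the linearity-plus-continuous-knapsack argument of Appendix~\ref{pf:closedform}, which also requires---since the circuit is a DAG, not a tree---the fact that the cached expected distance at a node depends only on the sub-circuit rooted there, so that the subproblems at distinct sum nodes are truly decoupled. Second, one should check that the input-node update preserves feasibility of the just-computed routing, i.e.\ that after setting $f_m$ to the empirical distribution of the routed points the coupling $(\pi^{(t)},\theta^{(t+1)})$ still matches marginals to $P_{\theta^{(t+1)}}$ and $\hat{Q}$ (an argument in the spirit of Proposition~\ref{prop:marginalmatching}, using that $\theta^{(t+1)}$'s sum weights are by construction the mass-fractions induced by $\pi^{(t)}$); granting this, the inequality $\empCircDist_p^p(P_{\theta^{(t+1)}},\hat{Q})\le F(\pi^{(t)},\theta^{(t+1)})$ holds and the chain above closes.
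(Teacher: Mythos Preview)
Your block--coordinate--descent framing is a genuinely different organization from the paper's argument. The paper proceeds by structural induction on the circuit: at a sum node $n$ with old routing $D_i$ and weights $\theta_i$, it first invokes the inductive hypothesis on each child to get $\Ex_{c_i'}[D_i]\le \Ex_{c_i}[D_i]$, and then argues that re-routing $D_i\to D_i'$ can only help because the old routing remains feasible. There is no global two-variable objective $F(\pi,\theta)$ in the paper's proof, and in particular the paper never asserts your step~(a): it does not claim that the greedy top-down routing attains $\empCircDist_p^p(P_{\theta^{(t)}},\hat Q)$, only that the objective does not increase after one pass. Your approach buys a cleaner, EM-style chain of inequalities and makes the parallel with Expectation--Maximization explicit; the paper's induction buys locality (each step is a statement about one node and the data routed to it) and avoids having to argue global optimality of the routing.

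One caution about your step~(a): the greedy routing at a sum node selects $\arg\min_c \text{cache}[c,d]$, and $\text{cache}[c,d]$ is the $\theta^{(t)}$-\emph{weighted average} of costs over the sub-mixture at $c$, not the minimum achievable cost there under a deterministic routing. So the claim that this greedy choice globally minimizes $F(\cdot,\theta^{(t)})$ over all routings is stronger than what you need and not obviously true when sum nodes are nested. For the monotonicity chain you actually only need $F(\pi^{(t+1)},\theta^{(t+1)})\le F(\pi^{(t)},\theta^{(t+1)})$, i.e.\ that the new routing is no worse than the old one under the new parameters---which is closer in spirit to the paper's ``old routing is feasible'' step. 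You already flag step~(a) as the main obstacle; I would suggest weakening it to this comparison with the previous routing rather than a global-optimality claim, which aligns your argument more closely with what the paper's induction actually establishes.
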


\begin{proof}
    Let $\Ex_n[D]$ denote the Wasserstein distance between the empirical distribution of $D$ and sub-circuit rooted at $n$ before an iteration of algorithm \ref{alg:minwass}, and let $\Ex_{n'}[D]$ denote the distance after an iteration. We will show by induction that $\Ex_{n'}[D] \leq \Ex_{n}[D]$. Our base case is when $n$ is an input node. By setting the parameters of $n$ to as closely match the empirical distribution of $D$ as possible, there is no  parameter assignment with a lower Wasserstein distance to $D$ so thus one iteration of algorithm \ref{alg:minwass} does not increase the objective value.

    Recursively, we have two cases:
    \paragraph{Case 1: $n$ is a product node}
    By the decomposition of the Wasserstein objective, we have that $\Ex_{n}[D]=\sum_i\Ex_{c_i}[D]$, which is $\geq \sum_i\Ex_{c_i'}[D]=\Ex_{n'}[D]$ by induction.

    \paragraph{Case 2: $n$ is a sum node}
    By the decomposition of the Wasserstein objective, we have that $\Ex_{n}[D]=\sum_i\theta_i\Ex_{c_i}[D_i]$ (where $D_i \subseteq D$ is the data routed to $n_i$), which is $\geq \sum_i\theta_i\Ex_{c_i'}[D_i]=\Ex_{n'}[D]$ by induction. Our parameter updates also update each $D_i \rightarrow D_i'$, but that also guarantees that $\Ex_{c_i'}[D_i] \geq \Ex_{c_i'}[D_i']$ since $D_i=D_i'$ is within the feasible set of updates for $D_i$. Thus, $\Ex_{n}[D] \geq \Ex_{n'}[D]$, so therefore the Wasserstein objective is monotonically decreasing.
\end{proof}

\section{Additional Experimental Results}

\subsection{Runtime Experiments for Computing \texorpdfstring{$\circDist_p$}{CWp}} \label{sec:moreruntime}

\begin{figure}
    \centering
    \includegraphics[width=0.35\linewidth]{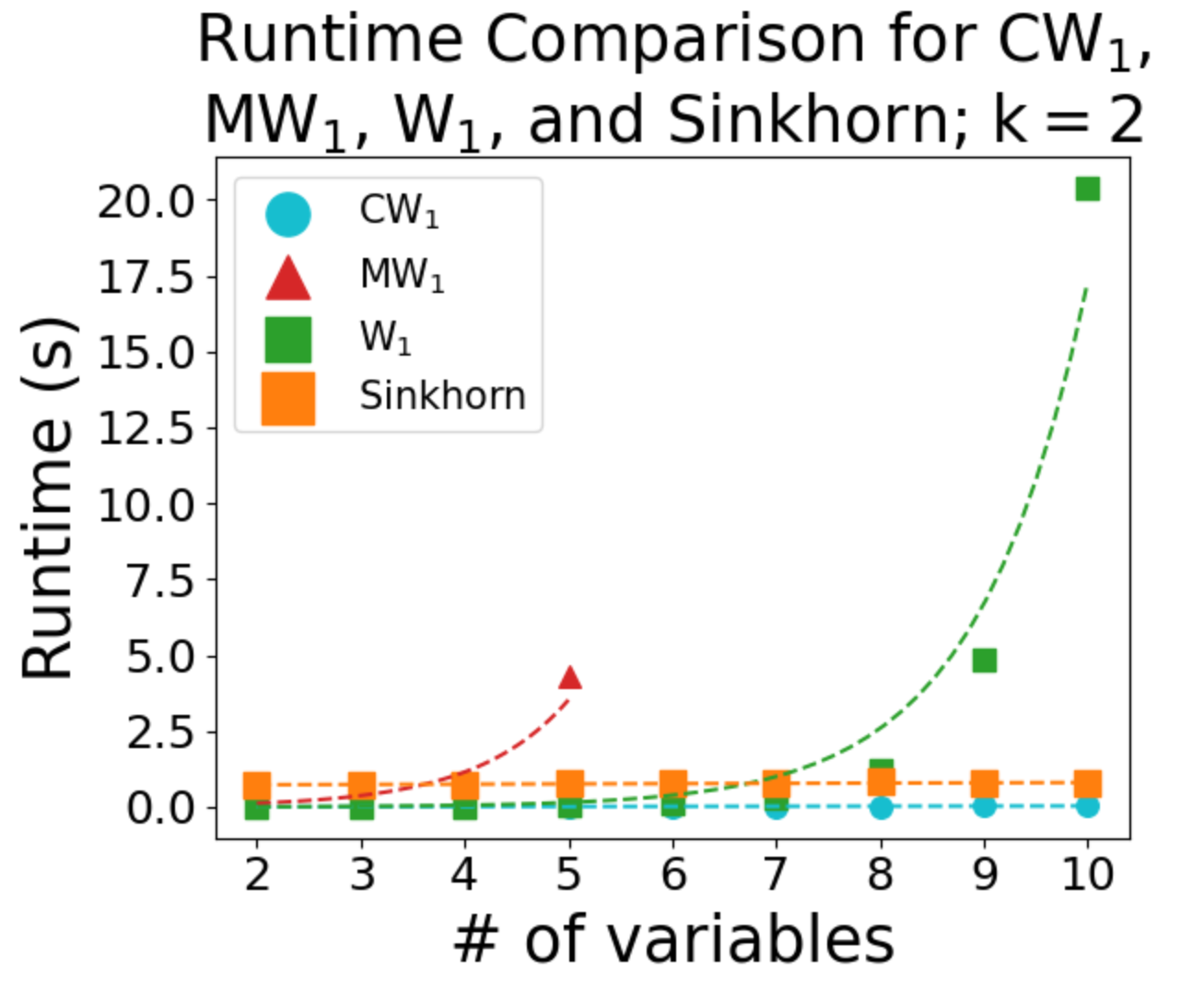} 
    \includegraphics[width=0.35\linewidth]{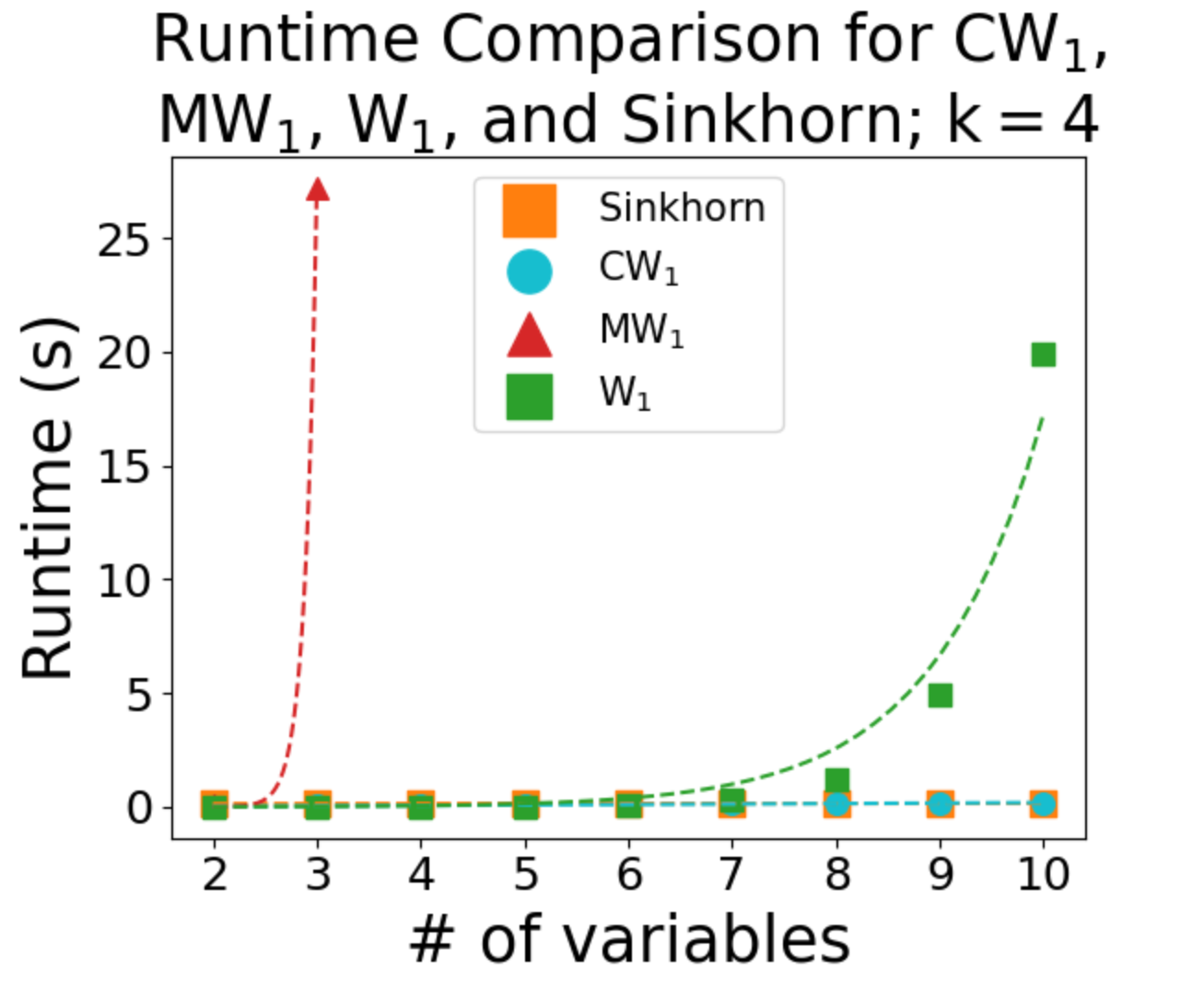} 
    \includegraphics[width=0.35\linewidth]{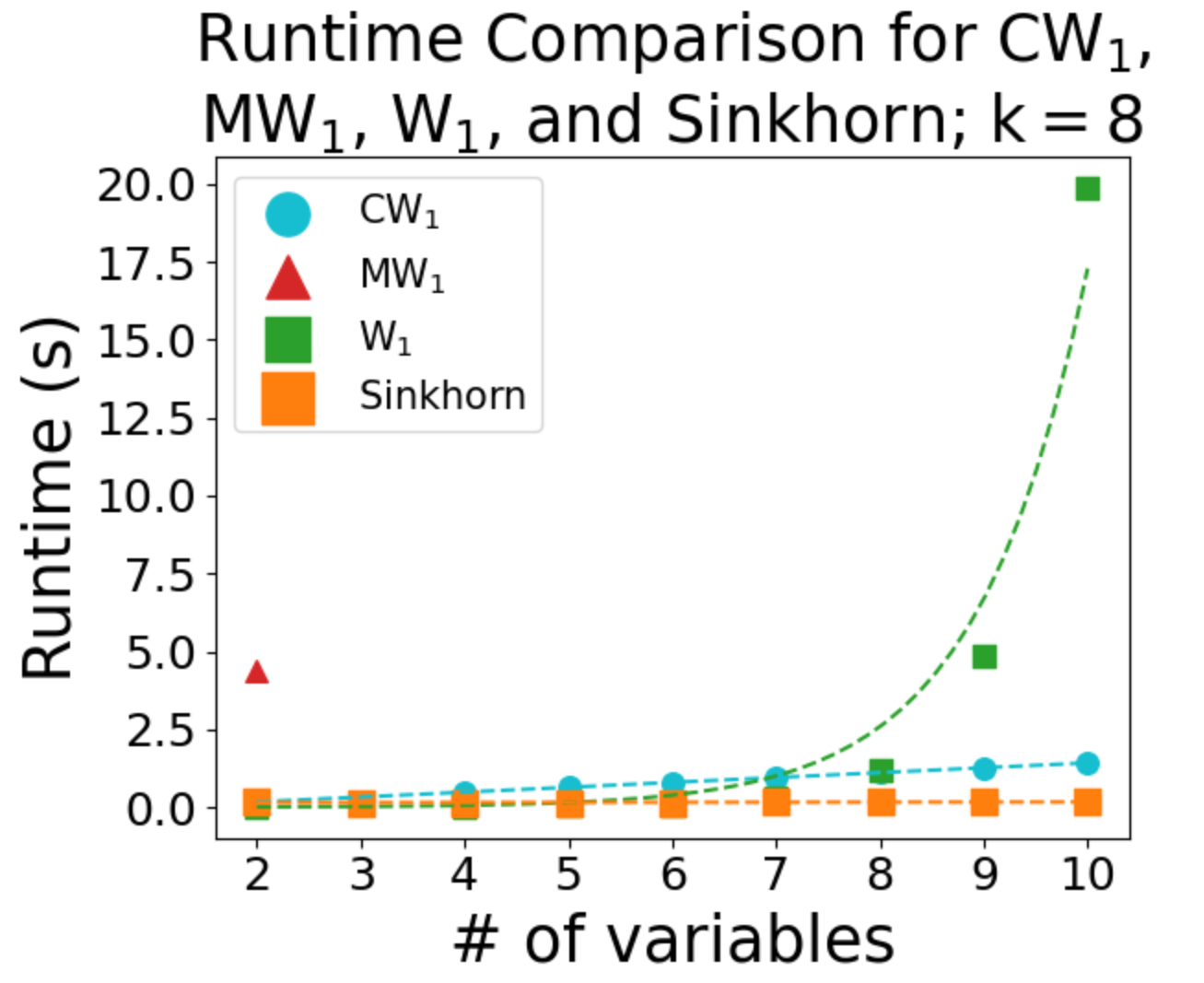} 
    \includegraphics[width=0.35\linewidth]{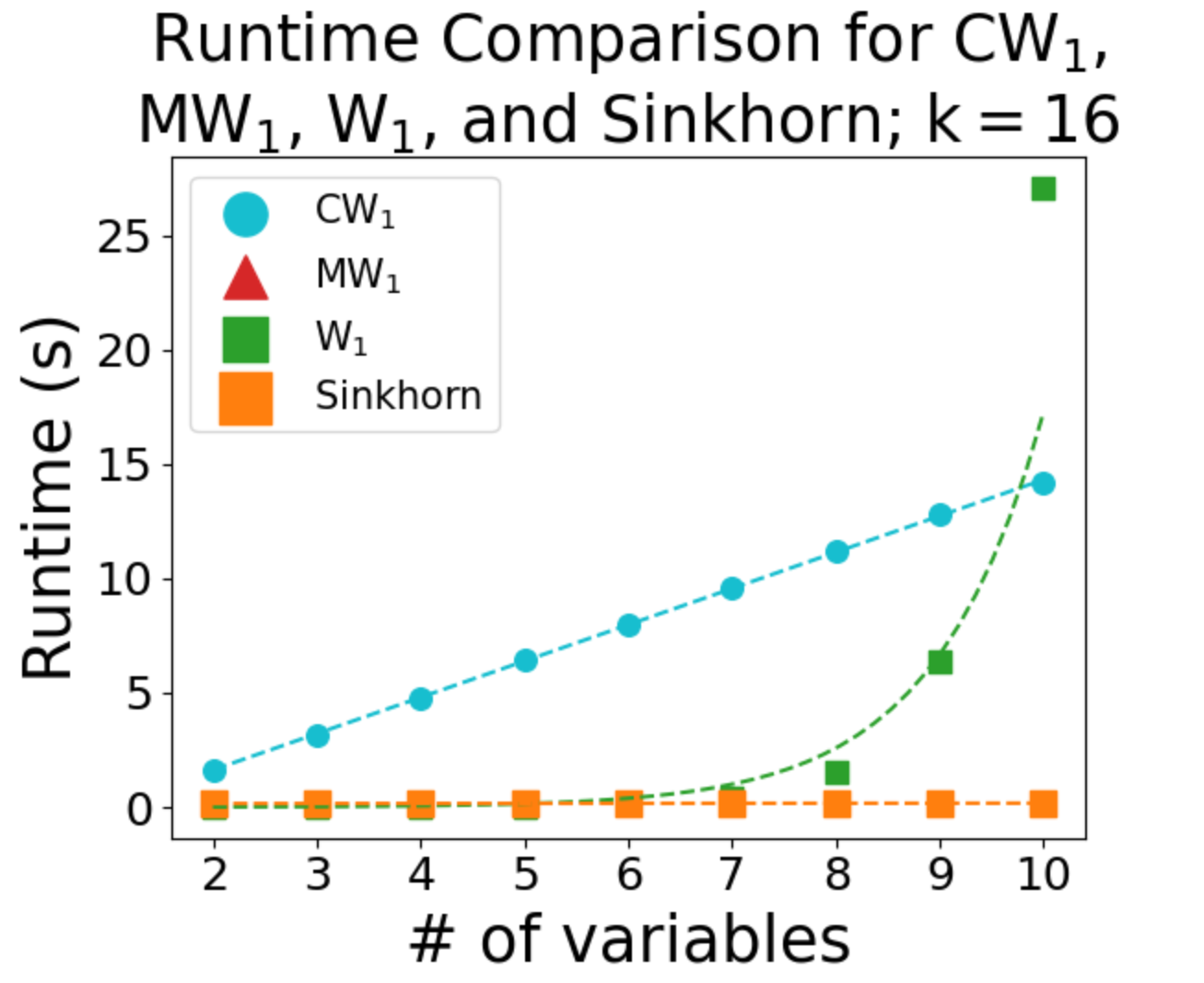} 
    \includegraphics[width=0.35\linewidth]{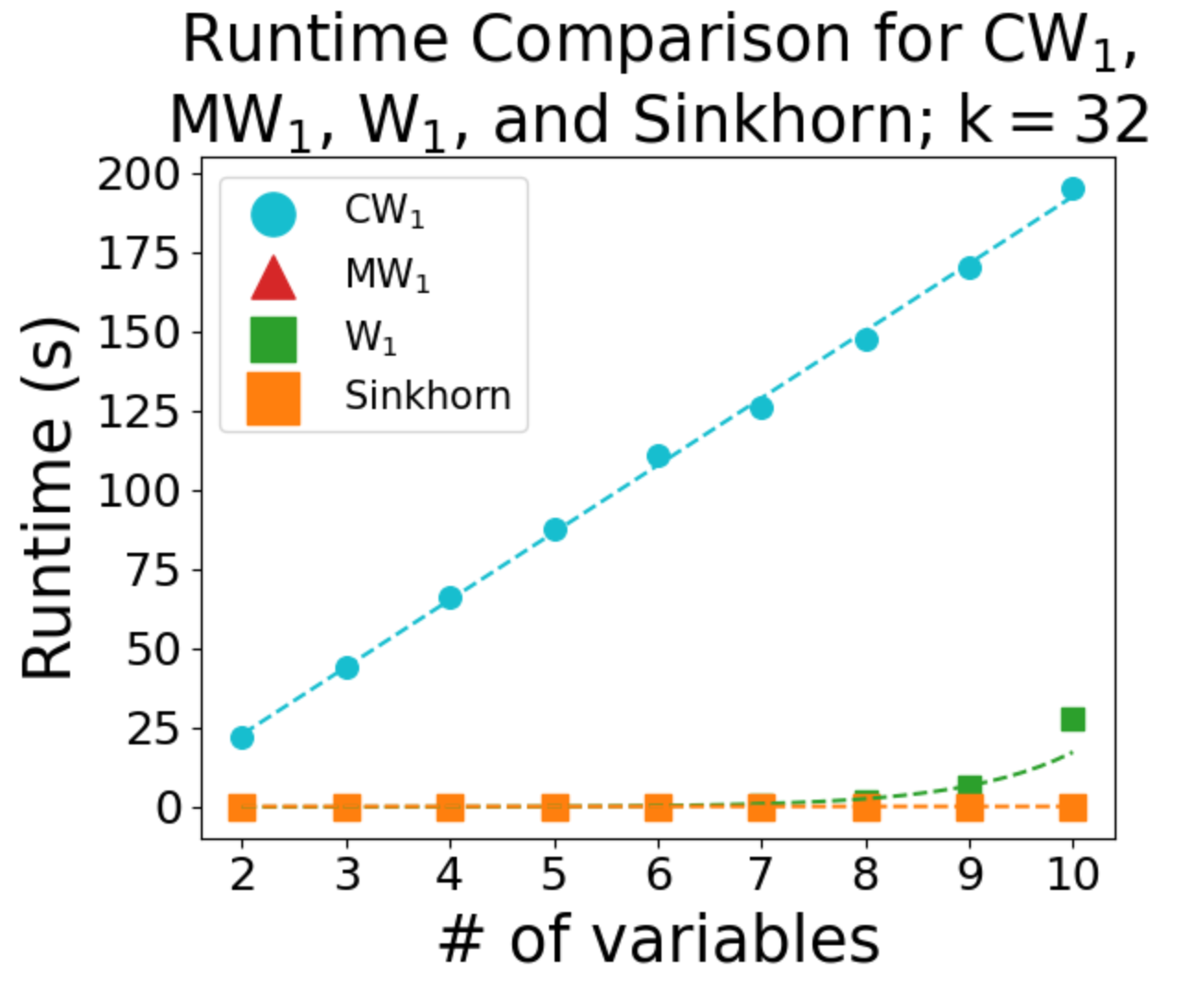} 
    \caption{Runtime for algorithms computing $\circDist_1$, $\gmmDist_1$, and $\wassDist_1$. For each plot, the block size $k$ is kept constant and the variable scope size $v$ is varied.}
    \label{fig:variablescopesize}
\end{figure}

\begin{figure}
    \centering
    \includegraphics[width=0.35\linewidth]{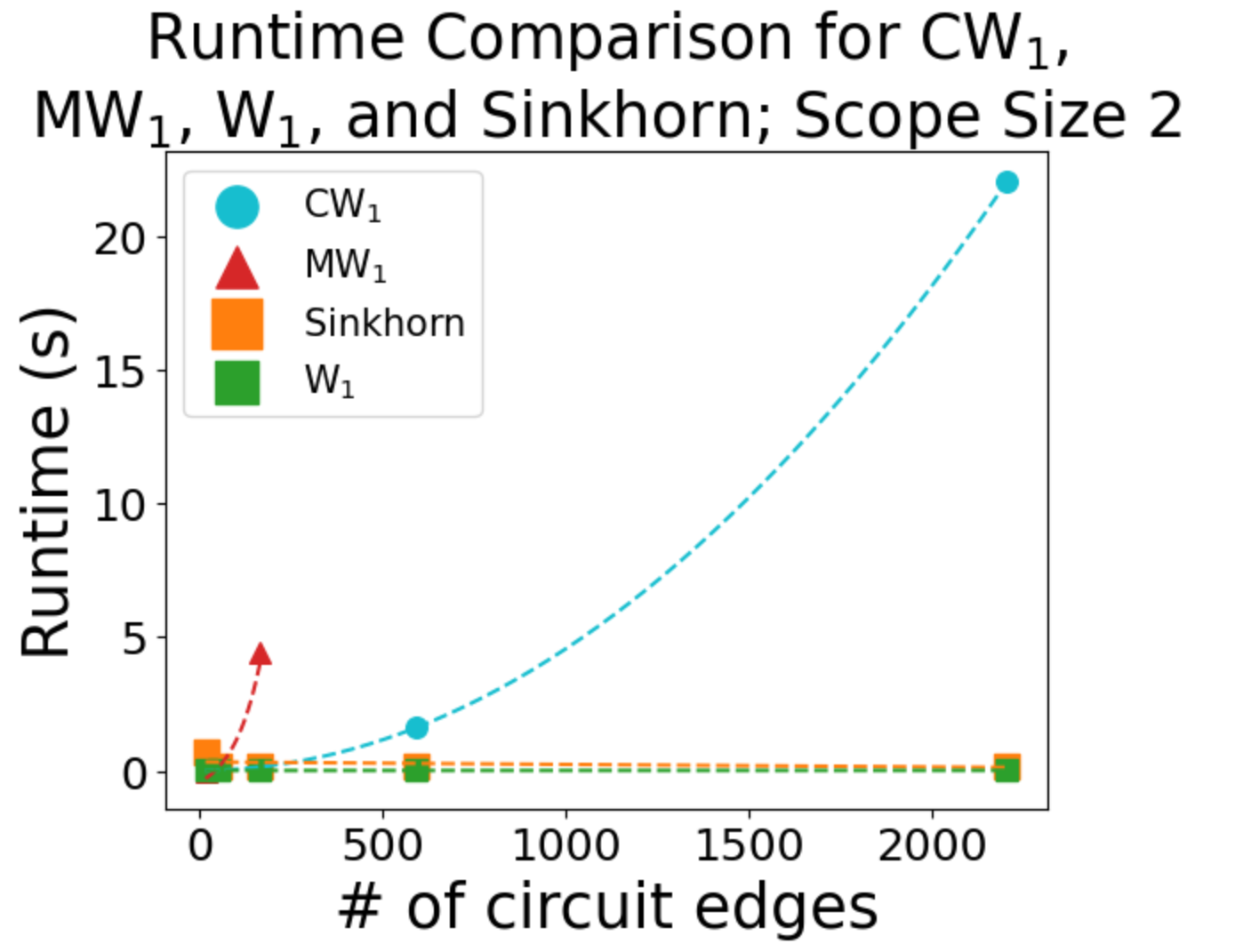} 
    \includegraphics[width=0.35\linewidth]{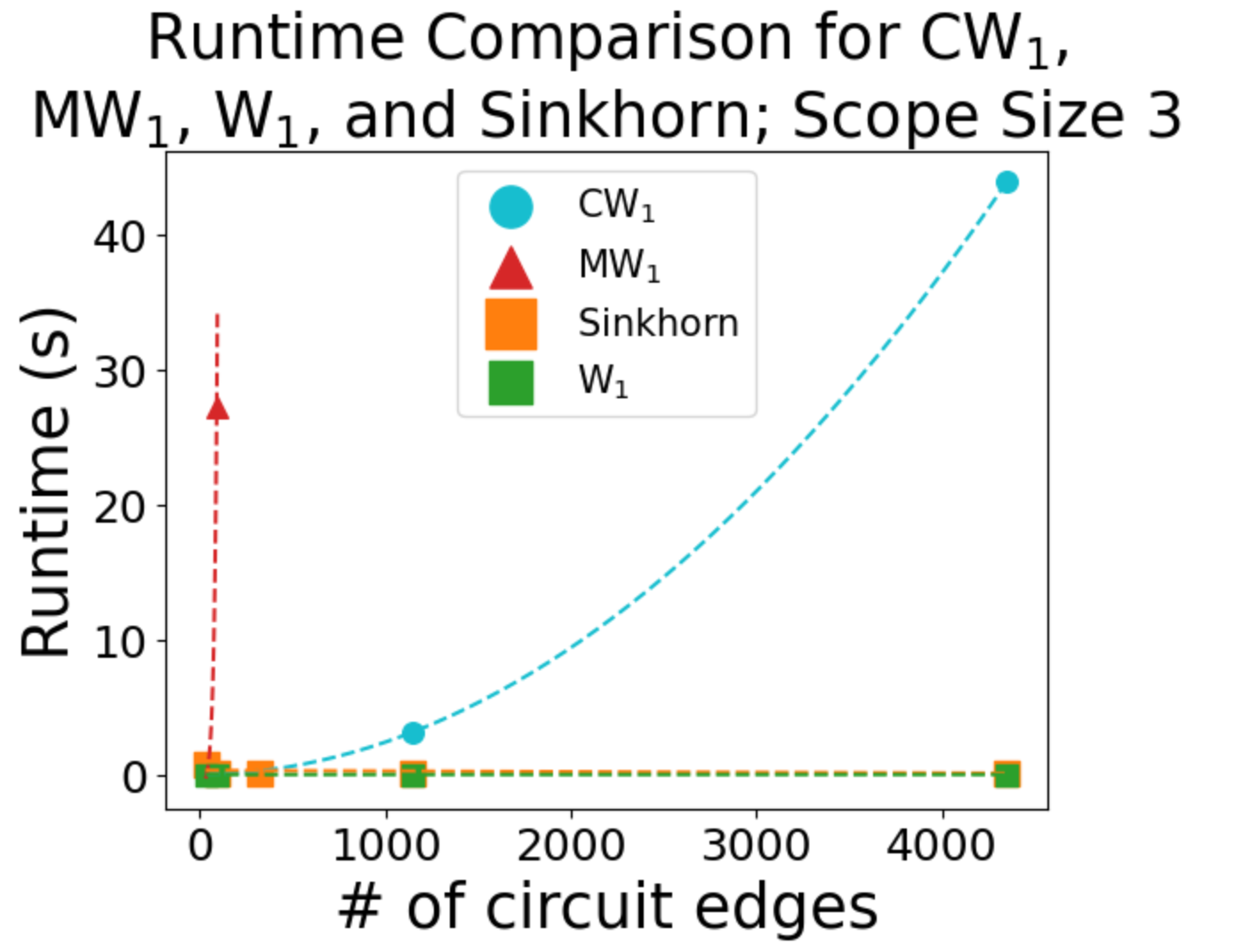} 
    \includegraphics[width=0.35\linewidth]{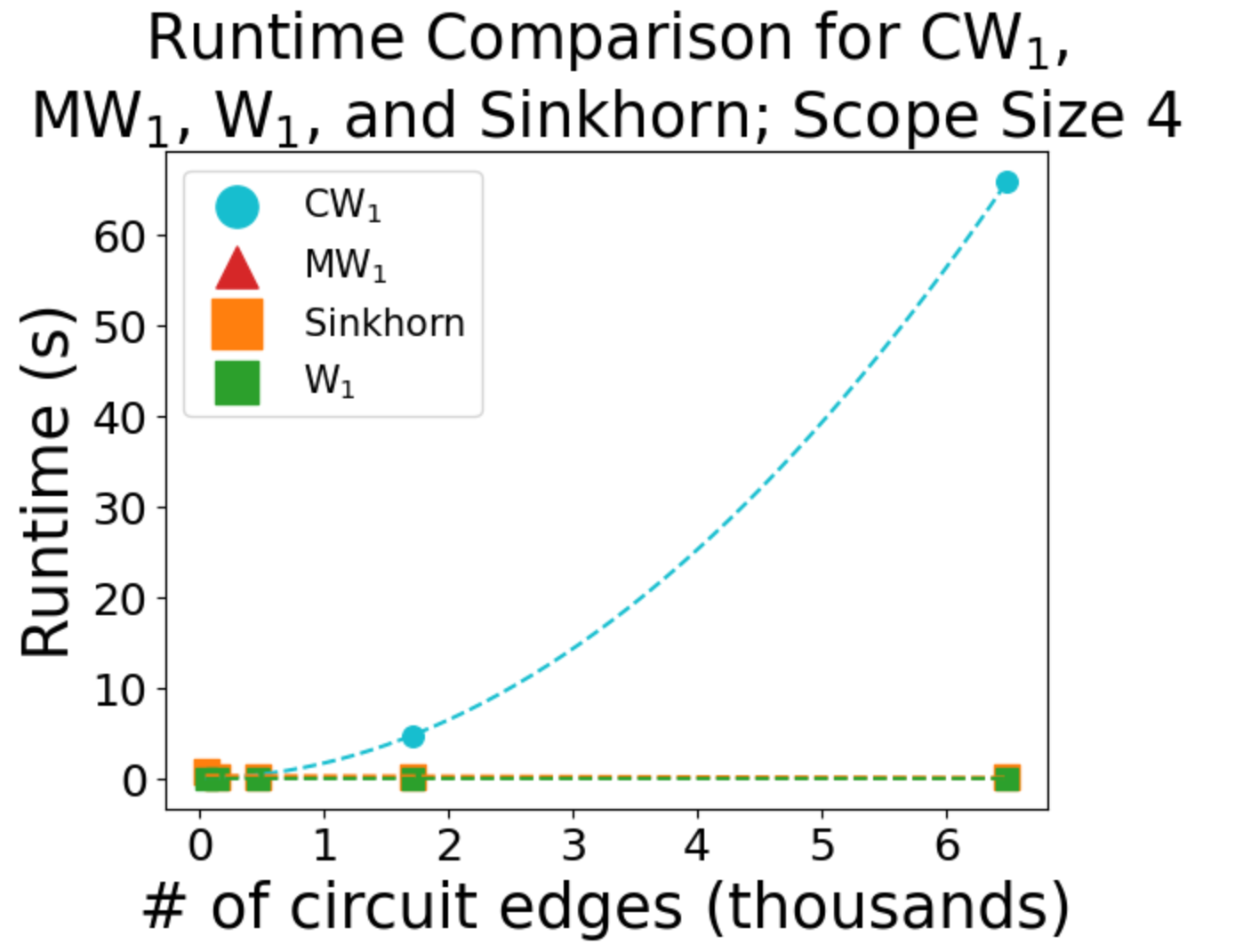} 
    \includegraphics[width=0.35\linewidth]{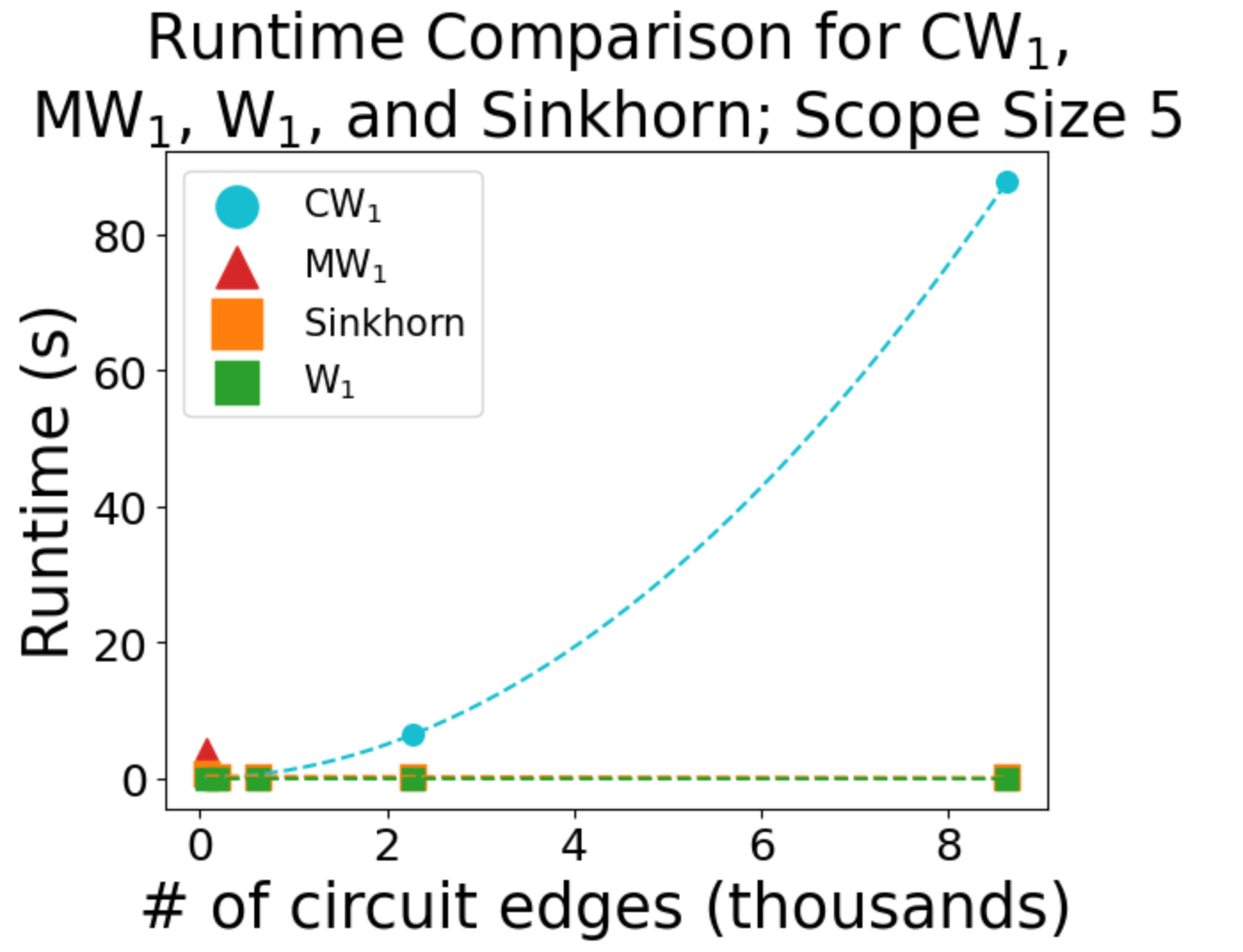}
    \includegraphics[width=0.35\linewidth]{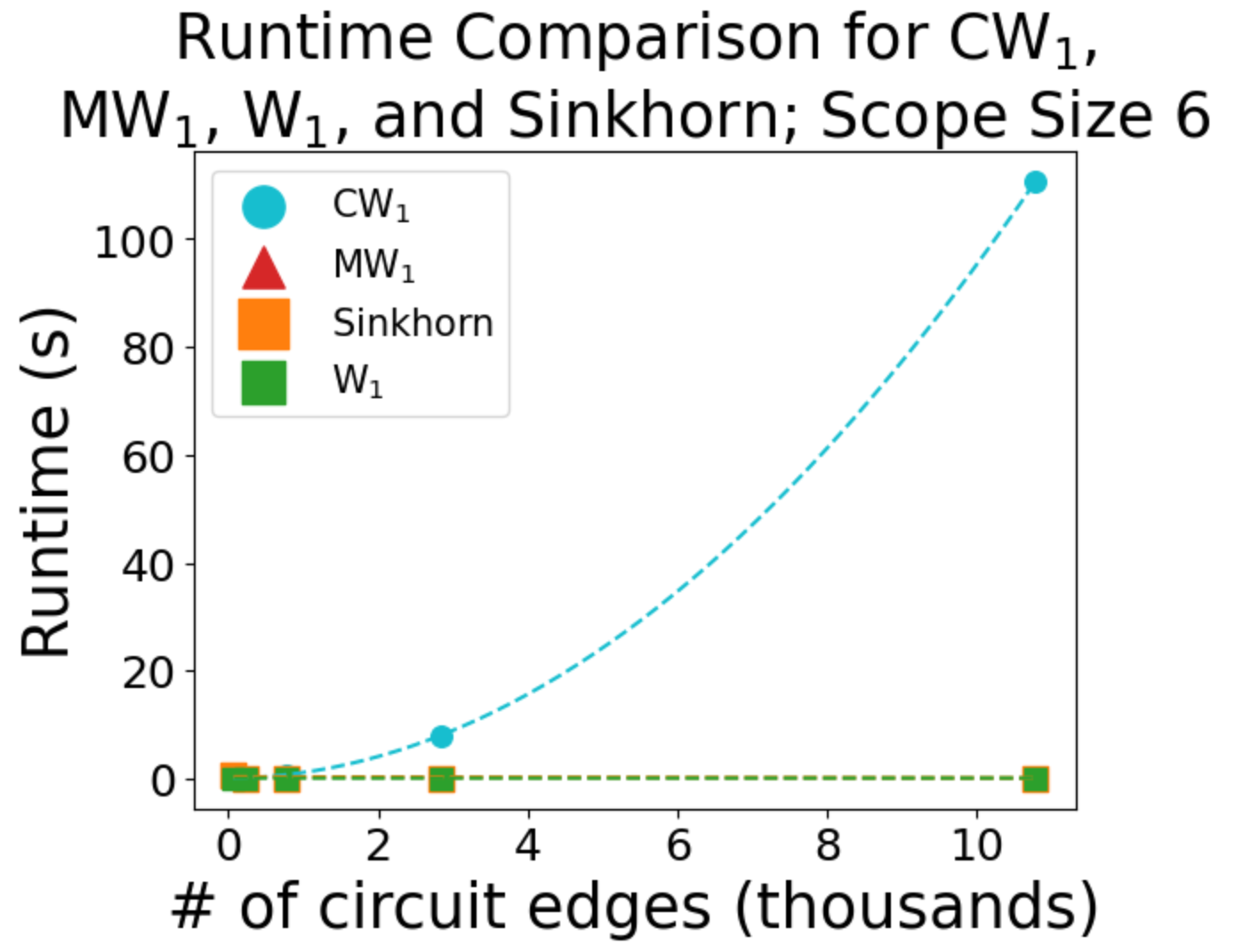} 
    \includegraphics[width=0.35\linewidth]{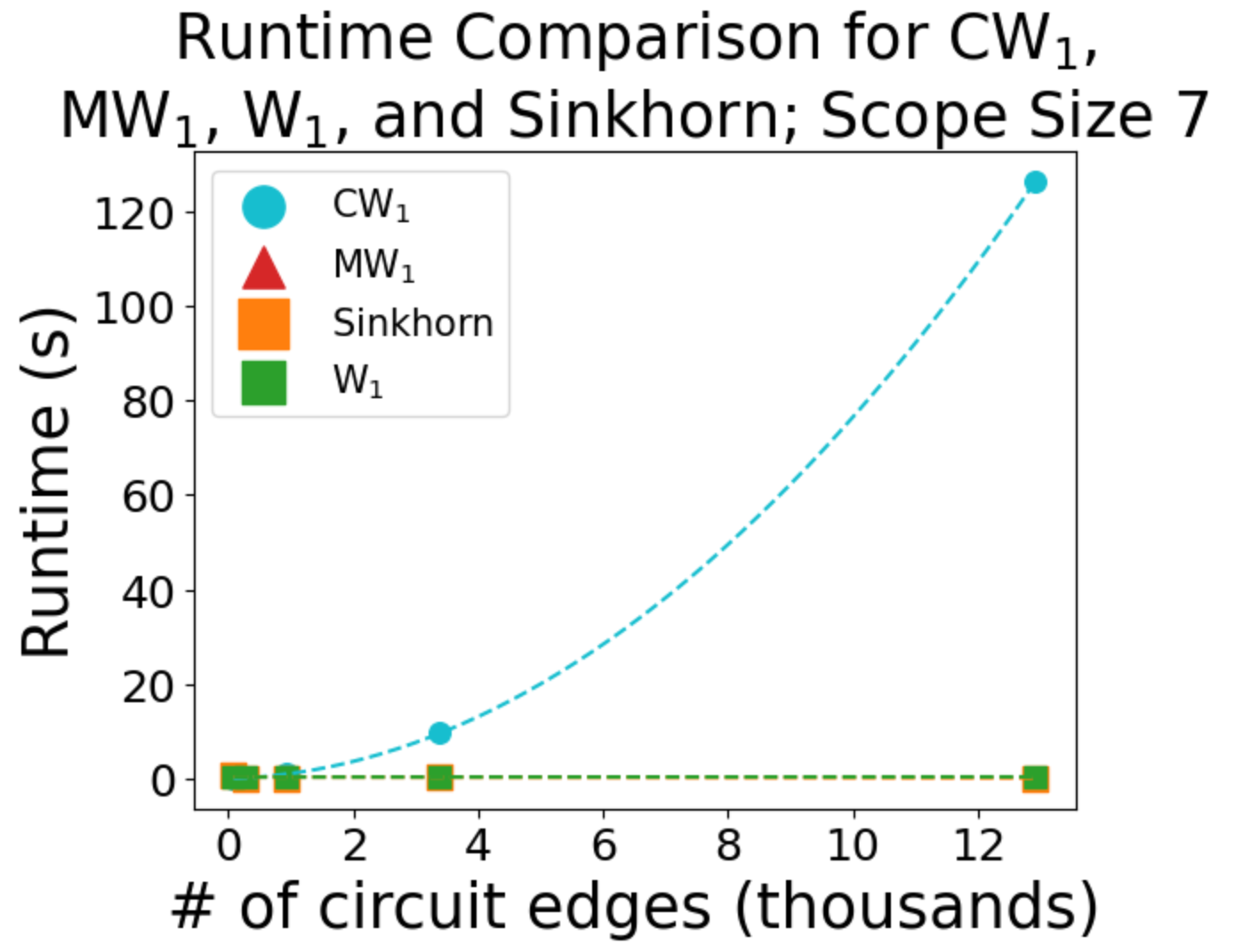} 
    \includegraphics[width=0.35\linewidth]{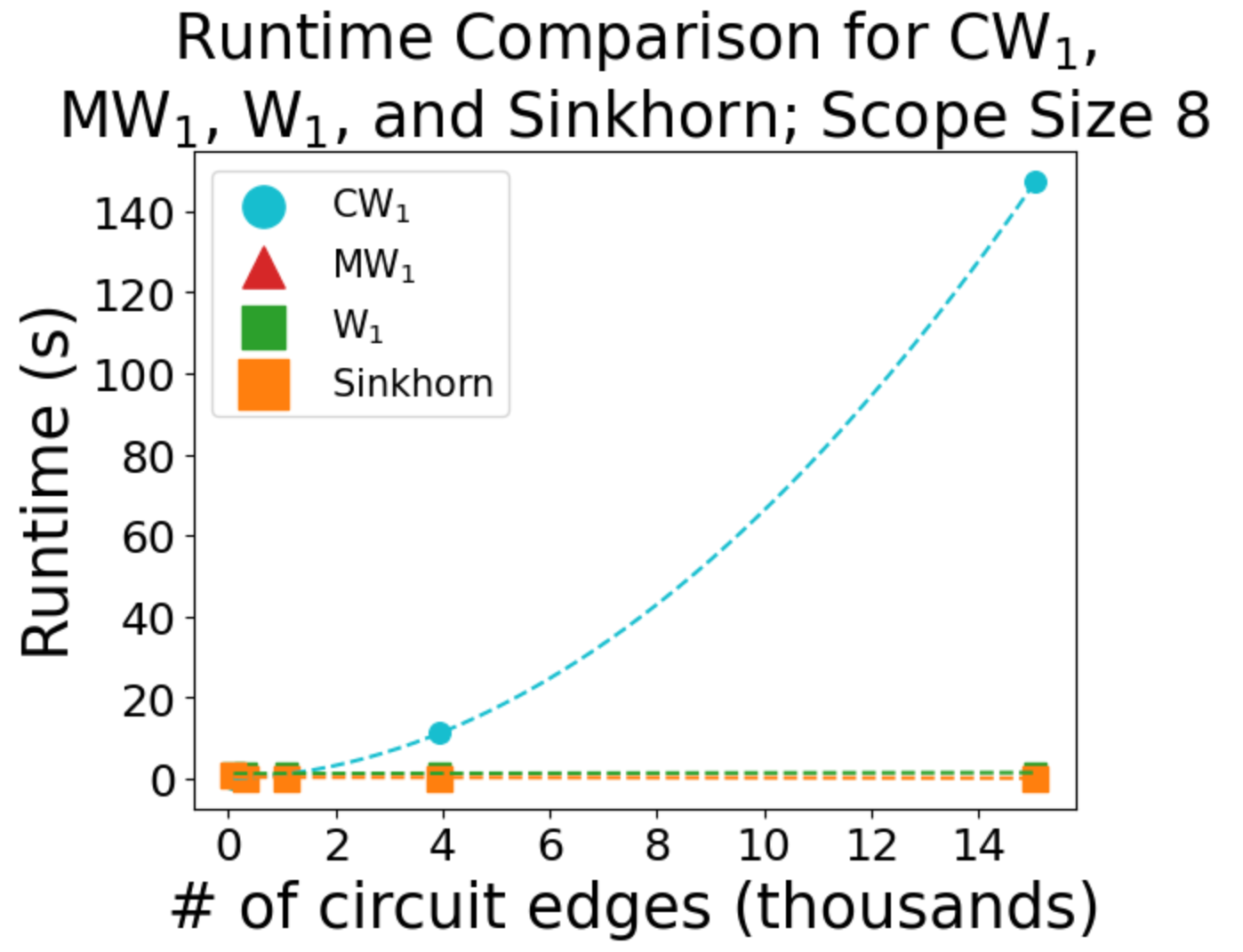} 
    \includegraphics[width=0.35\linewidth]{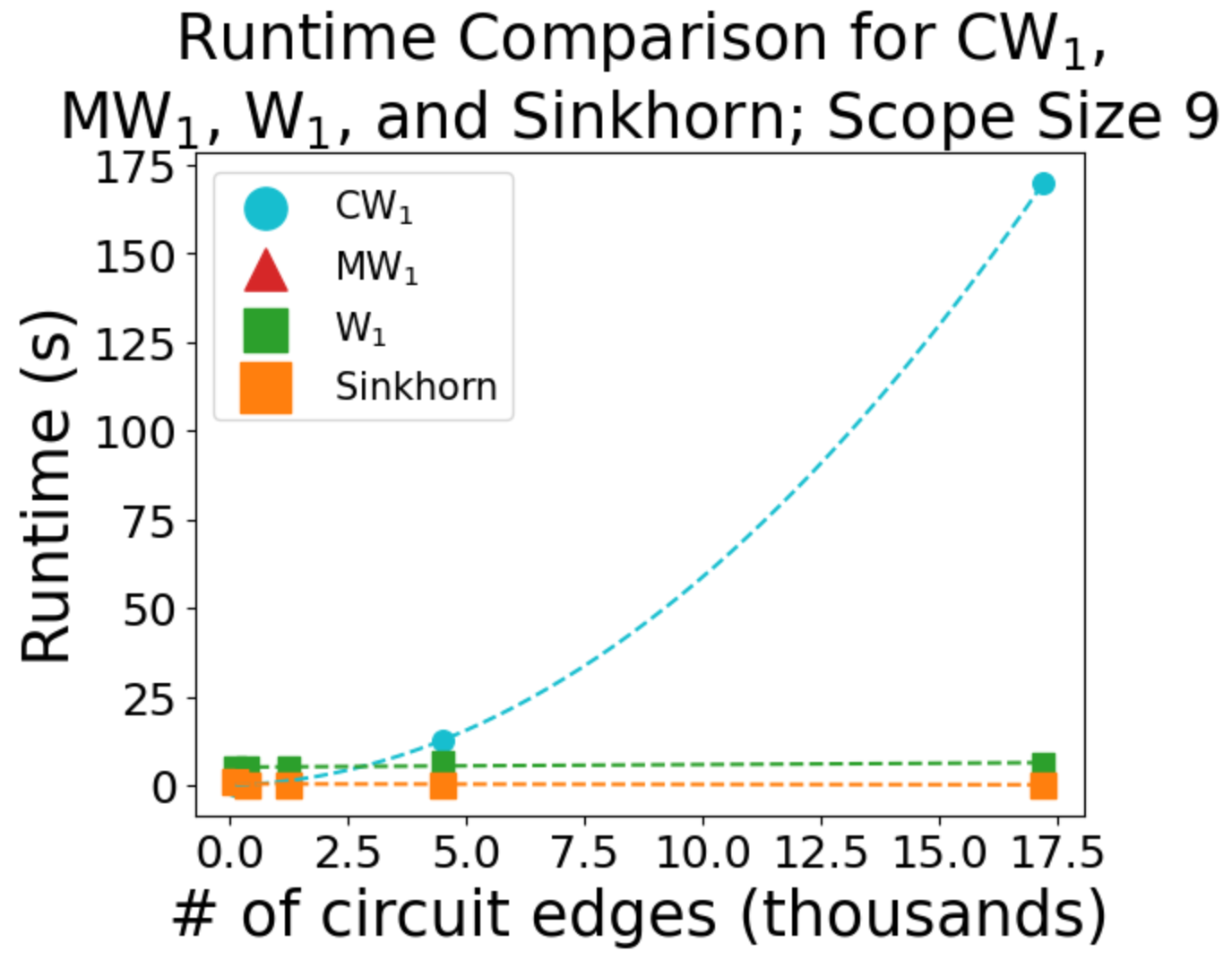} 
    \caption{Runtime for algorithms computing $\circDist_1$, $\gmmDist_1$, and $\wassDist_1$. For each plot, the circuit scope size $v$ is kept constant and the block size $k$ is varied to adjust the number of circuit edges. The ``number of circuit edges`` is the number of edges in one of the two circuits.}
    \label{fig:variableblocksize}
\end{figure}

\begin{figure}
    \centering
    \includegraphics[width=0.35\linewidth]{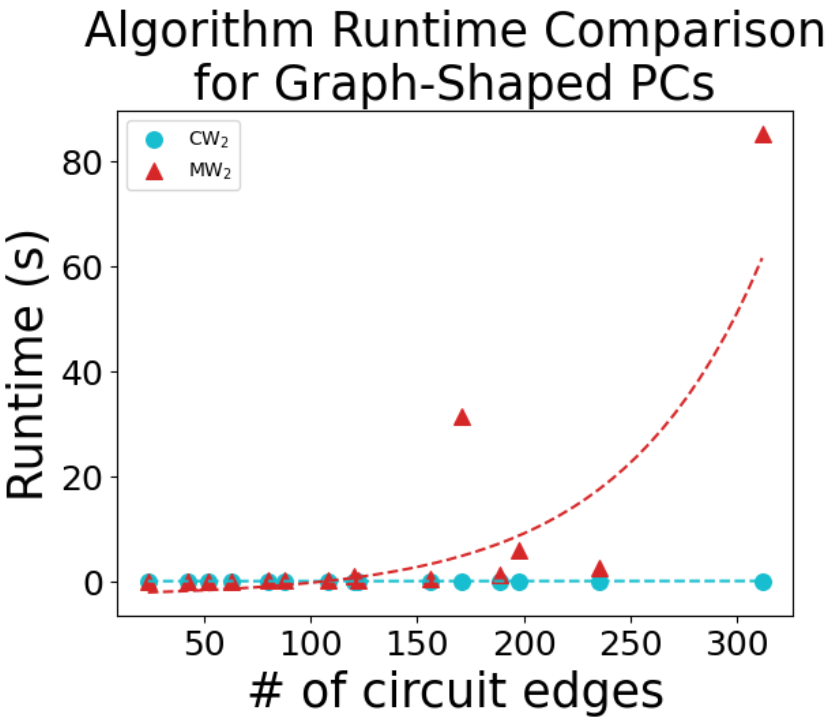} 
    \includegraphics[width=0.35\linewidth]{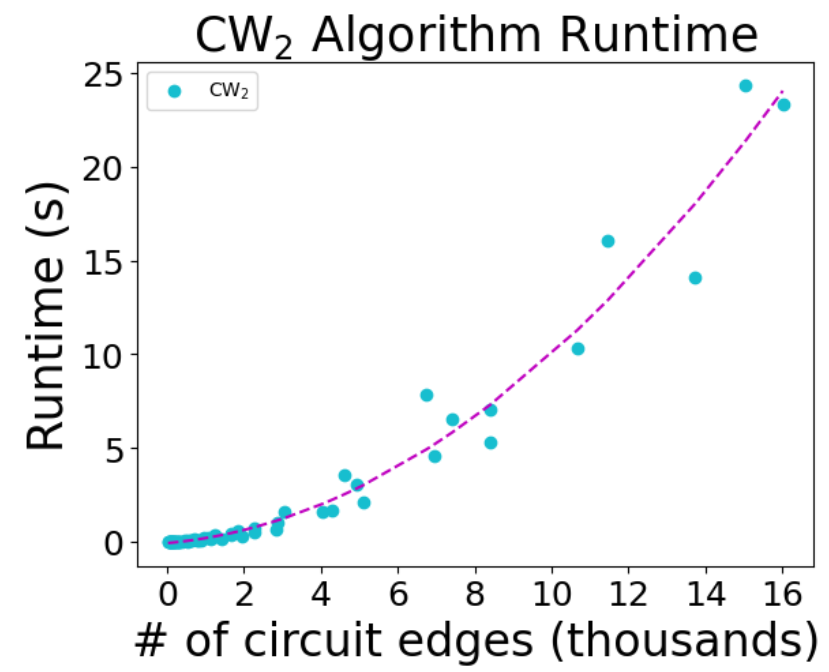} 
    \caption{Runtime for algorithms computing $\circDist_2$ and $\gmmDist_2$ between circuits with Gaussian input distributions. As computing $\gmmDist_2$ quickly becomes impractical and runs out of memory, we continue with larger circuits in the second figure.}
    \label{fig:gaussiancircuits}
\end{figure}

The value obtained for each circuit size and variable scope size is averaged over 20 runs, and we omit data points for experiments that ran out of memory. Lastly, all experiments were ran on a machine with an Intel Core i9-10980XE CPU, 256Gb of DDR5 RAM, and a single RTX3090Ti. To solve the linear programs we used Gurobi~\citep{gurobi}, a commercial linear program solver available under academic license.

We conduct runtime experiments by randomly generating 20 pairs of circuits with a given variable scope size $v$ and sum node branching factor---also referred to as block size for HCLT structures---$k$. We then compute $\circDist_1$, $\gmmDist_1$, and $\wassDist_1$ as described in Section~\ref{sec:experiments}. Data points not displayed either run out of memory or are infeasible due to numerical stability issues. See Figure~\ref{fig:variablescopesize} for a breakdown by circuit block size and Figure~\ref{fig:variableblocksize} for a breakdown by circuit scope size.

Lastly, we demonstrate the utility of our algorithm by computing $\circDist_p$ between PCs with Gaussian input distributions---for which $\wassDist_p$ cannot be computed simply using a linear program as done previously. See Figure~\ref{fig:gaussiancircuits} for the results. Note that the runtime of our algorithm is quadratic in the circuit size, while computing $\gmmDist_2$ has exponential runtime in the circuit size.

\subsection{Empirical Wasserstein Parameter Estimation Experimental Results}\label{sec:parameterlearningexps}
We investigated the computed Wasserstein objective and bits-per-dimension (BPD) of circuits of various sizes learned using EM and WM (our method). We found that larger circuits trained via EM have a significantly lower BPD than smaller circuits, which was not the case for circuits trained via WM. Looking at the Wasserstein objective for these circuits, we see that bpd is not directly correlated with the Wasserstein objective; circuits with a lower Wasserstein objective can have a slightly higher bpd, and vice versa.

Lastly, we consider a modification of Algorithm \ref{alg:minwass} that employs \emph{stochastic routing} of data at sum nodes; succinctly, we introduce hyperparameter $p$ that introduces a probability $p$ that a given data point is routed randomly with uniform probability to any given child node, and a probability $1-p$ that the data point is routed optimally as detailed in Algorithm \ref{alg:minwass}. When $p=0$, we refer to this as \emph{deterministic WM}; otherwise, we refer to the algorithm as \emph{stochastic WM}.

In our experiments, we found that $p=0.1$ yields the best results for minimizing the Wasserstein objective. For circuits of block size 4, we observe that this significantly decreases the Wasserstein objective without a significant change to the bits-per-dimension of the learned circuit. Over 5 random restarts, the stochastic WM algorithm resulted in a Wasserstein objective between 29947 and 29986; conversely, the deterministic WM algorithm resulted in a Wasserstein objective of 32766. However, this decrease in Wasserstein distance resulted in no decrease in bits-per-dimension for the trained models, with stochastic WM yielding circuits with BPDs of between 1.503 and 1.537. See Table \ref{table:emvswm} for more details.

\subsection{Additional Color Transfer Experiments}\label{sec:morecolortransfer}

\begin{figure}
    \centering
    \includegraphics[width=0.45\linewidth]{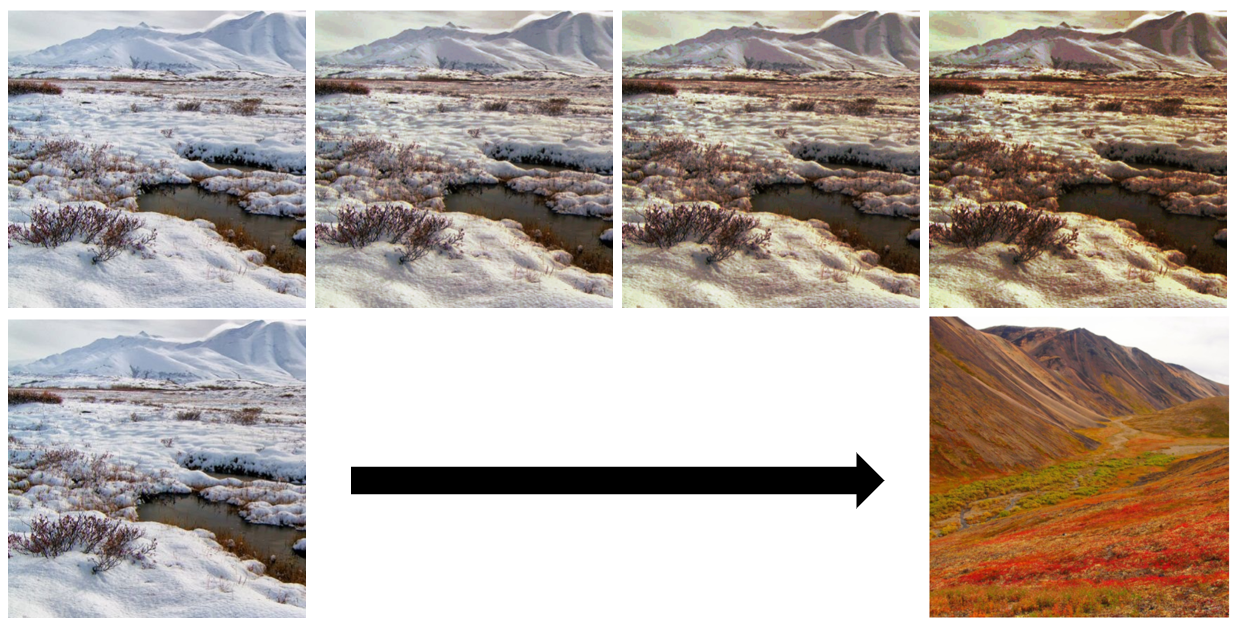}
    \includegraphics[width=0.45\linewidth]{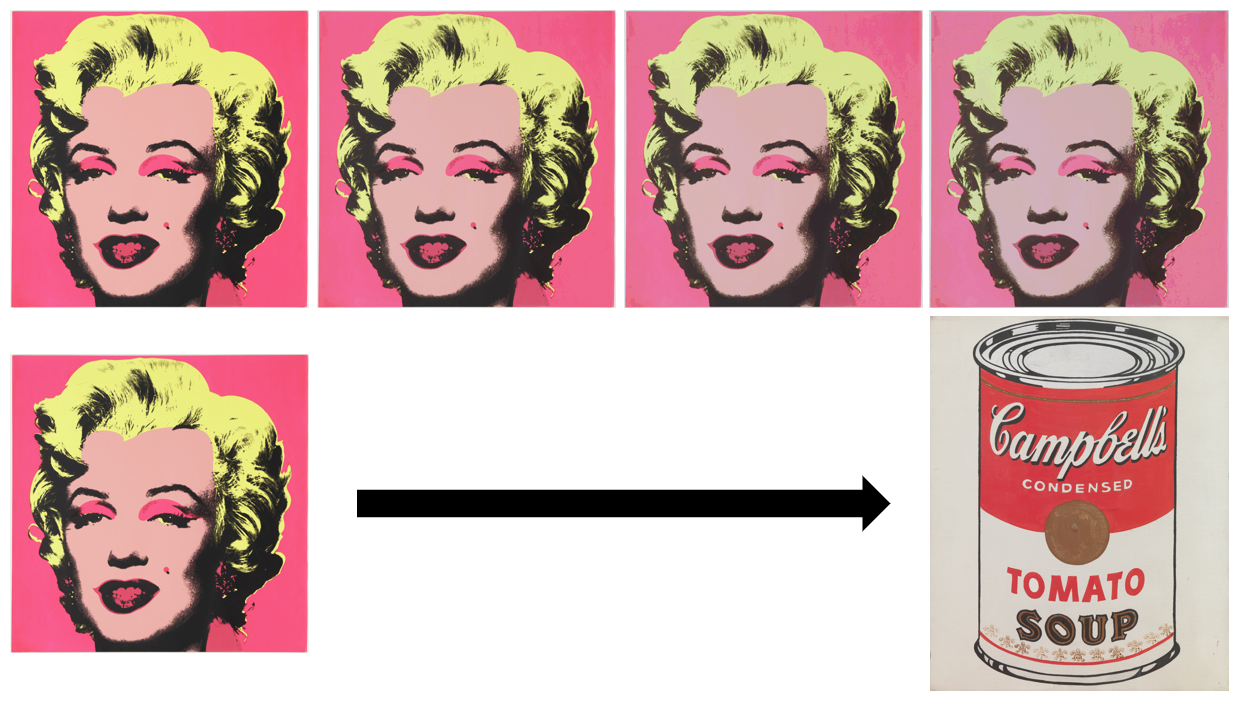}
    \includegraphics[width=0.45\linewidth]{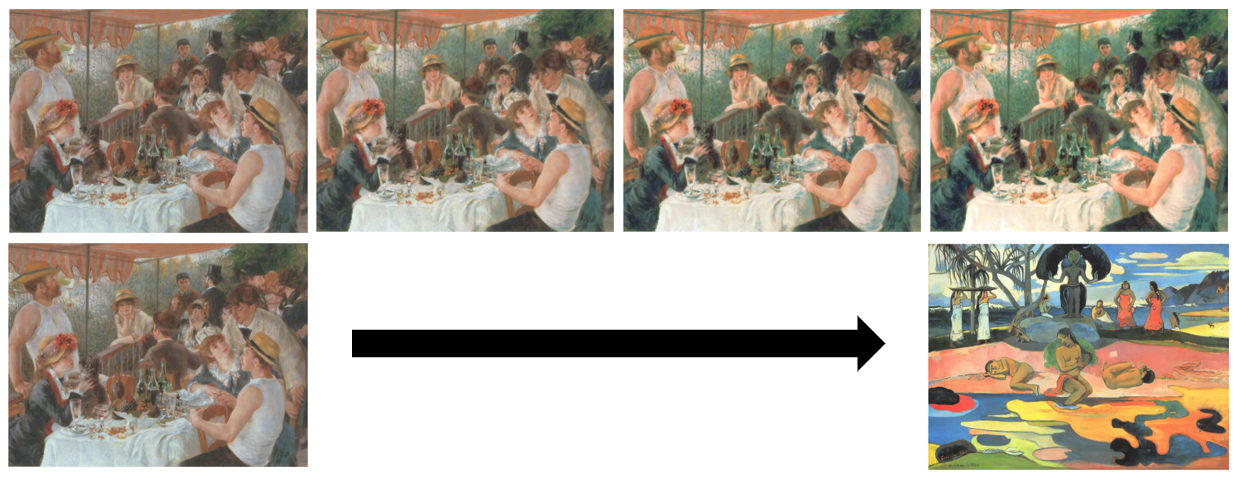}
    \includegraphics[width=0.45\linewidth]{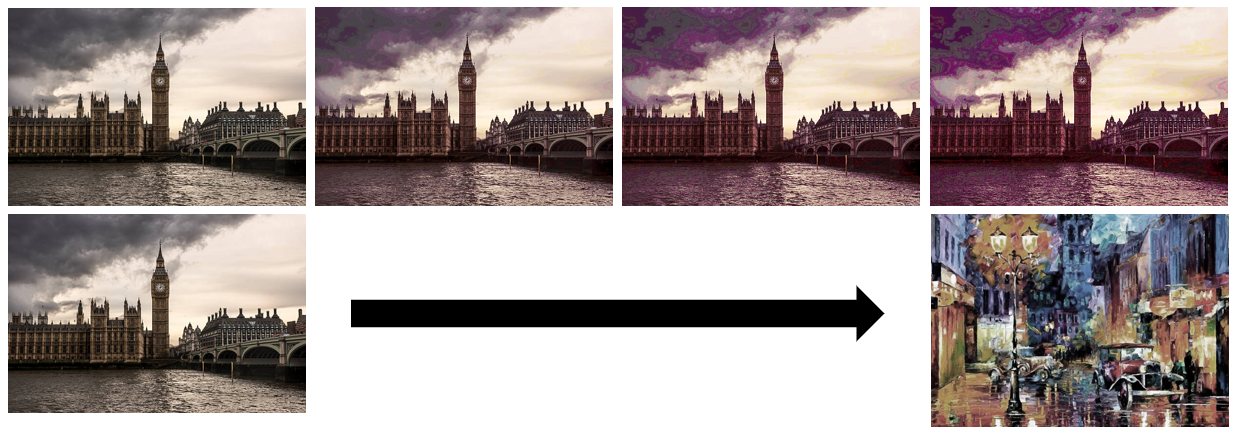}
    \caption{\small Color transfer between images along geodesics using coupling circuits, for $t=0, \frac{1}{3}, \frac{2}{3}$, $1$ in the direction of arrows.}
    \label{fig:morecolortransfer}
\end{figure}

An application of coupling circuits we explore in Section~\ref{sec:experiments} is color transfer via optimal transport. Succinctly, we transport the \textit{color histogram}---the 3-dimensional probability distribution of pixel color values---of image $a$ to that of another image $b$ by learning compatible PCs $P(\X)$ and $Q(\Y)$ over the color distributions of images $a$ and $b$, computing the optimal coupling circuit $C(\X,\Y)$, and transporting each pixel with color value $\x$ to the corresponding pixel $\y=\Ex_C[\Y|\X=\x]$ (which can be computed tractably). See Figure~\ref{fig:morecolortransfer} for some examples.

\subsection{Visualizing Transport Plans Between PCs}\label{sec:transportplanviz}

Since our algorithm does not only return $\circDist_p$ between two circuits but also the corresponding transport plan, we can visualize the transport of point densities between the two distributions by conditioning the coupling circuit on an assignment of random variables in one circuit. We can similarly visualize the transport plan for an arbitrary region in one PC to another by conditioning on the random variable assignments being within said region.

Since the transport plan for a single point (or a region of points) is itself a PC, we can query it like we would any other circuit; for example, sampling a set of corresponding points, as well as computing \textit{maximum a posteriori}---which is tractable if the original two circuits are marginal-deterministic \citep{probcirc}---for the transport plan of a point corresponds to the most likely corresponding point in the second distribution for the given point. Because a coupling circuit inherits the structural properties of the original circuit, it is straightforward to understand what queries are and are not tractable for a point transport map.

In Figure \ref{fig:transmaps}, we provide an example of visualizing the optimal transport plan between two randomly-generated PCs. We note that despite the transport plan being constrained to be a PC with a certain structure, the resulting transport plan matches our intuition as to what an optimal transport plan should look like.

\begin{figure}
    \centering
    \includegraphics[width=0.25\linewidth]{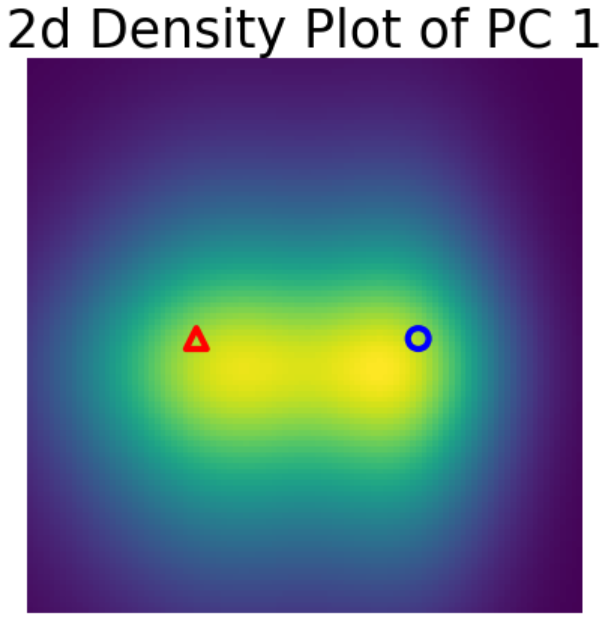} 
    \hspace{10pt}
    \includegraphics[width=0.25\linewidth]{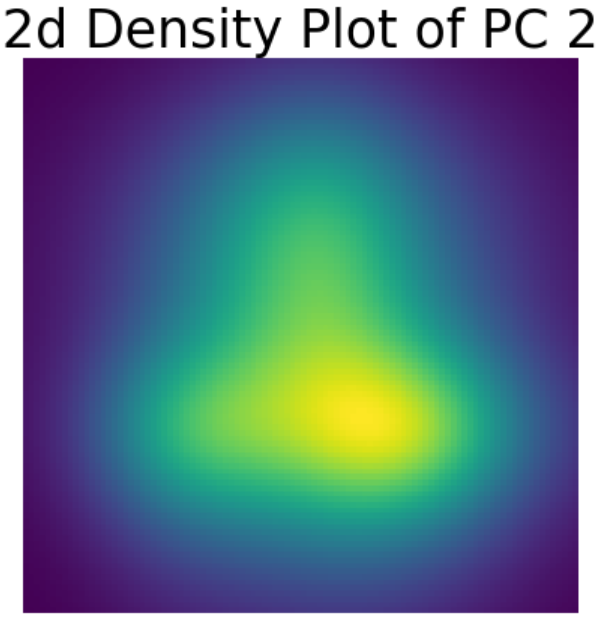}
    \hspace{200pt}
    \includegraphics[width=0.25\linewidth]{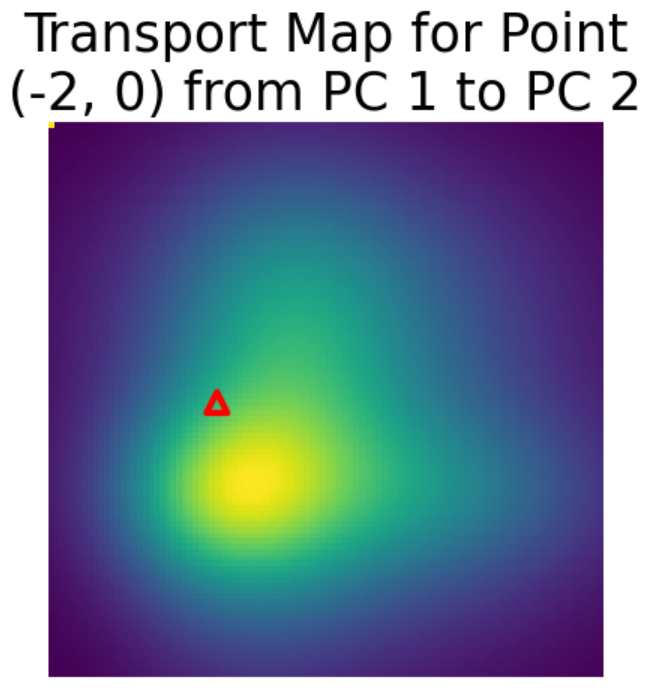} 
    \hspace{10pt}\includegraphics[width=0.25\linewidth]{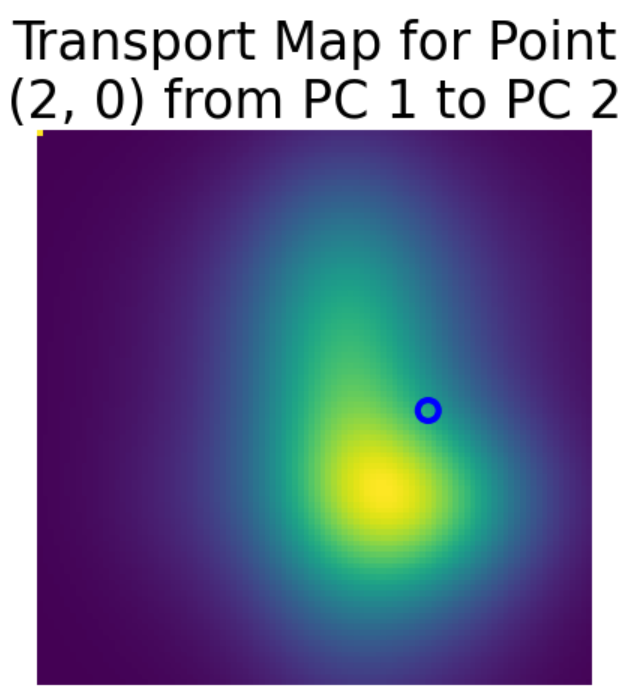}
    \caption{Visualization of transporting the indicated points from the distribution parameterized by PC 1 to the distribution parameterized by PC 2. The points (red triangle and blue circle) were arbitrarily selected to show how a point mass is redistributed according to the computed transport map. The top two figures visualize the input distributions, while the bottom two figures visualize where the point density indicated is transported to from the first to the second distribution.}
    \label{fig:transmaps}
\end{figure}

\end{document}